\newtheorem{theorem}{Theorem}
\renewenvironment{proof}{{\noindent\bfseries Proof.}}{\qed}
\begin{document}
%
\title{Towards Compact ConvNets via Structure-Sparsity Regularized Filter Pruning}
%
%
%

\author{Shaohui Lin,~\IEEEmembership{Student Member,~IEEE,}
        Rongrong Ji$^{*}$,~\IEEEmembership{Senior Member,~IEEE,}
        Yuchao Li, \\
        Cheng Deng,~\IEEEmembership{Member,~IEEE,} 
        and Xuelong Li,~\IEEEmembership{Fellow,~IEEE}
\thanks{S. Lin and Y. Li are with the Fujian Key Laboratory of Sensing and Computing for Smart City, School of Information Science and Engineering, Xiamen University, 361005, China.}
\thanks{R. Ji (Corresponding author) is with the Fujian Key Laboratory of Sensing and Computing for Smart City, School of Information Science and Engineering, Xiamen University, 361005, China, and Peng Cheng Laboratory, Shenzhen, 518055, China (e-mail: rrji@xmu.edu.cn).}
\thanks{C. Deng is with the School of Electronic Engineering, Xidian University, Xi'an 710071, China.}
\thanks{X. Li is with the School of Computer Science and Center for OPTical IMagery Analysis and Learning (OPTIMAL), Northwestern Polytechnical University, Xi'an 710072, China.}}

%
%

\markboth{IEEE TRANSACTIONS ON NEURAL NETWORKS AND LEARNING SYSTEMS}%
{Shell \MakeLowercase{\textit{et al.}}: Bare Demo of IEEEtran.cls for IEEE Journals}

\maketitle

\begin{abstract}
The success of convolutional neural networks (CNNs) in computer vision applications has been accompanied by a significant increase of computation and memory costs, which prohibits their usage on resource-limited environments such as mobile or embedded devices. 
To this end, the research of CNN compression has recently become emerging. 
In this paper, we propose a novel filter pruning scheme, termed structured sparsity regularization (SSR), to simultaneously speedup the computation and reduce the memory overhead of CNNs, which can be well supported by various off-the-shelf deep learning libraries.
Concretely, the proposed scheme incorporates two different regularizers of structured sparsity into the original objective function of filter pruning, which fully coordinates the global outputs and local pruning operations to adaptively prune filters. 
We further propose an Alternative Updating with Lagrange Multipliers (AULM) scheme to efficiently solve its optimization. AULM follows the principle of ADMM and alternates between promoting the structured sparsity of CNNs and optimizing the recognition loss, which leads to a very efficient solver ($2.5\times$ to the most recent work that directly solves the group sparsity based regularization).
Moreover, by imposing the structured sparsity, the online inference is extremely memory-light, since the number of filters and the output feature maps are simultaneously reduced.
The proposed scheme has been deployed to a variety of state-of-the-art CNN structures including LeNet, AlexNet, VGGNet, ResNet and GoogLeNet over different datasets. 
Quantitative results demonstrate that the proposed scheme achieves superior performance over the state-of-the-art methods. We further demonstrate the proposed compression scheme for the task of transfer learning, including domain adaptation and object detection, which also show exciting performance gains over the state-of-the-art filter pruning methods. 
\end{abstract}

\begin{IEEEkeywords}
Convolutional neural networks, Structured sparsity, CNN acceleration, CNN compression.
\end{IEEEkeywords}

%
\IEEEpeerreviewmaketitle

\begin{figure*}[t]
\centering
  \subfigure[The compete process to prune network]{
    \includegraphics[scale = 0.45]{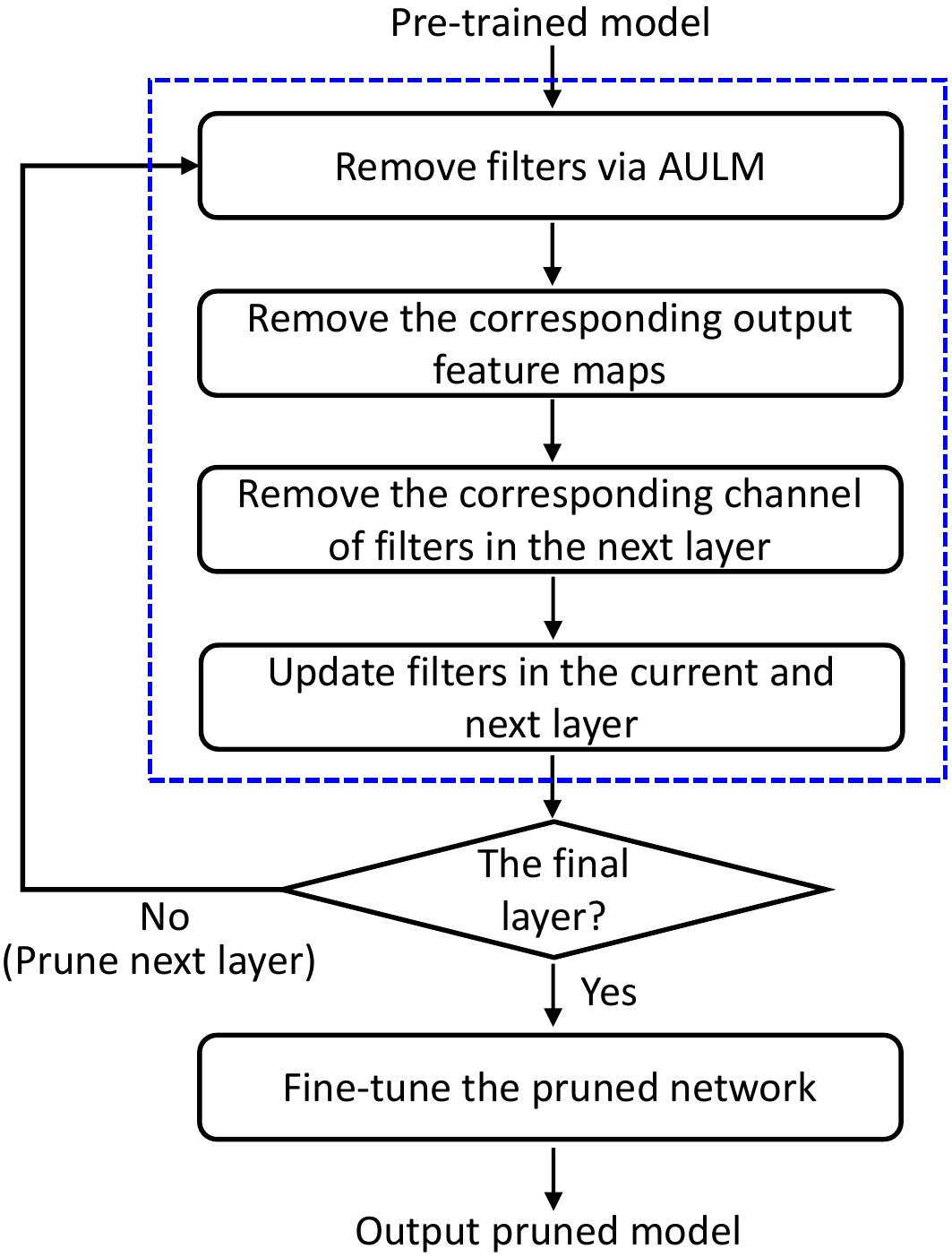}
    \label{fig:1_a}
    }
  \subfigure[The process to prune the single layer]{
    \includegraphics[scale = 0.55]{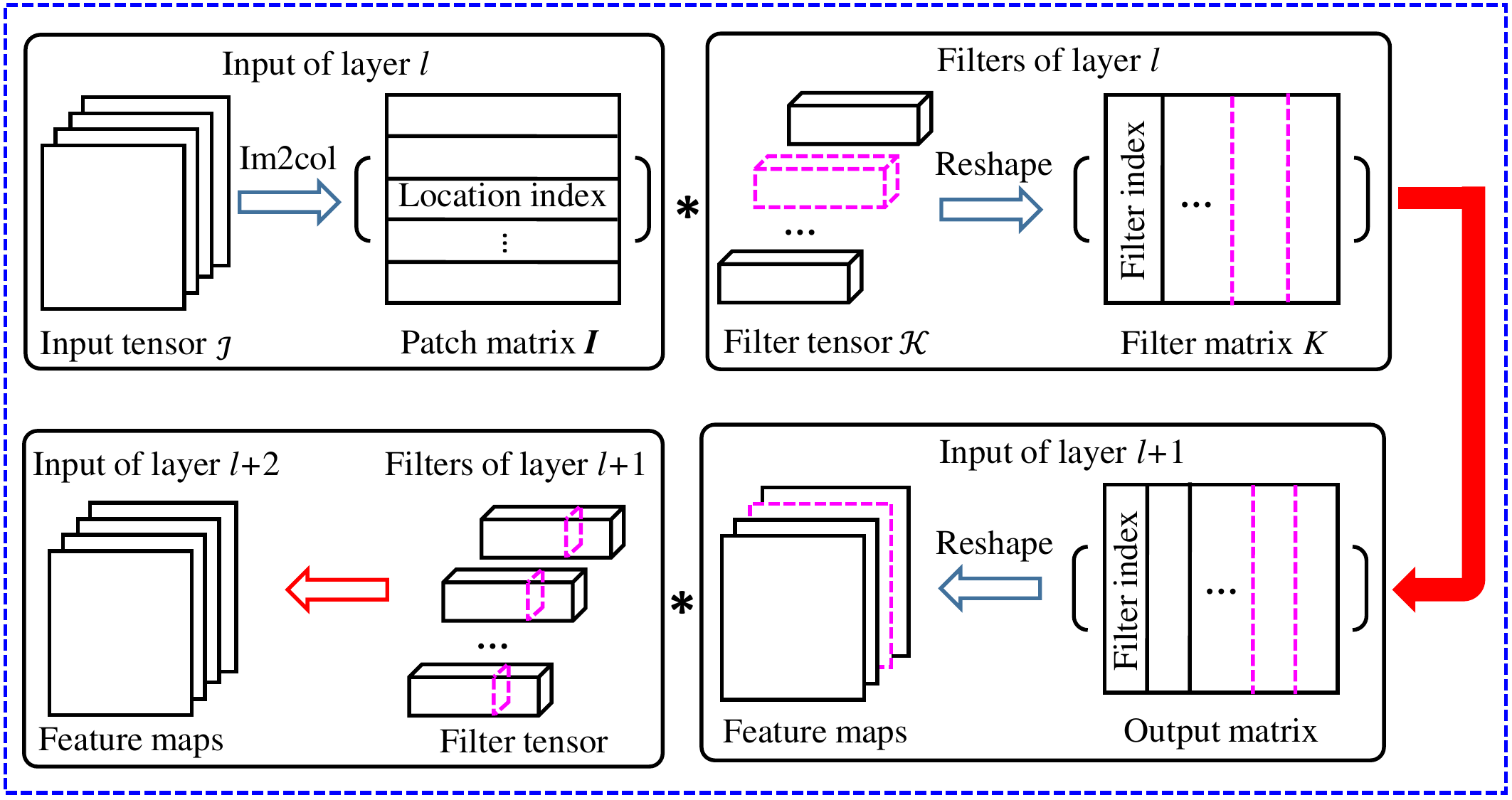}
    \label{fig:1_b}
  }
\vspace{-0.5em}
\caption{Illustration of SSR scheme. (a) The complete process includes adaptively pruning the unimportant filters via AULM solver, removing the corresponding output feature maps, removing the corresponding channels of filters in the next layer, as well as updating filters and fine-tuning the pruned network. (b) The process to select and prune unimportant filters, the corresponding output feature maps and channels in the next layer (highlighted in purple). In particular, the tensor-based convolutional operator can be replaced by a matrix-by-matrix multiplication using BLAS library to accelerate the computation of CNNs. (Best viewed in color.) }
\label{fig1}
\end{figure*}

\section{Introduction}
%
%
%
%
\IEEEPARstart{I}{N} recent years, convolutional neural networks (CNNs) have achieved great success on a variety of computer vision applications, ranging from action recognition \cite{chen2018deep}, object recognition \cite{simonyan2014very,he2016deep,krizhevsky2012imagenet,szegedy2015going,chollet2017xception,xie2017aggregated,huang2017densely},  to object detection \cite{girshick2015fast,girshick2014rich,ren2015faster,liu2016ssd}. 
One essential foundation of such success lies in the gigantic amount of model parameters to accompany with the large-scale training data. 
For instance, ResNet-152 \cite{he2016deep} has 57 million parameters, costs 230MB storage, and requires 11.3 billion FLOPs\footnote{FLOPs: The number of floating-point operations.} to classify one image with a resolution of $224 \times 224$.
Under such a circumstance, these models cannot be directly deployed to scenarios that require fast processing or compact storage, such as mobile systems or embedded devices. 

Substantial efforts have been devoted to the speedup and compression of CNNs for efficient online inference. 
Among the existing methods, network pruning has attracted increasing attention recently, due to its ability to reduce an overwhelming amount of parameters.
As one of the earliest works, LeCun \emph{et al}. \cite{lecun1989optimal} proposed an Optimal Brain Damage algorithm to prune the network by reducing the number of connections with a theoretically-justified saliency measurement. 
Later, Hassibi and Stork \cite{hassibi1993second} proposed an Optimal Brain Surgeon algorithm to remove unimportant parameters, which are determined by the second-order derivations of their weights. 
Recently, Han \emph{et al}. \cite{han2015deep,han2015learning} proposed to prune parameters with small magnitudes to reduce the network size. 
%
However, the above pruning schemes typically produce sparse CNNs with non-structured random connections, which will cause irregular memory access, \emph{i.e.}, a complex storage structure that adversely impacts the efficiency of accessing CNN models in memory. 
Moreover, such non-structured sparse CNNs cannot be supported by off-the-shelf libraries, which thus need specialized hardware \cite{han2016eie} or software \cite{liu2015sparse} designs to improve their efficiency in online inference.

To address the shortcoming of such non-structured connections, filter pruning is regarded as a promising solution, which has shown significant speedup in online inference as well as good independence to software/hardware platforms \cite{luo2017ThiNet,li2017pruning,anwar2015structured,lebedev2015fast,wen2016learning}. 
Differing from the previous works in parameter pruning, filter pruning can be further integrated by various CNN compression or acceleration methods, \emph{e.g.}, low-rank decomposition \cite{denil2013predicting,lebedev2014speeding,lin2017espace,lin2016towards,jaderberg2014speeding,tai2016convolutional,denton2014exploiting,kim2015compression}, DCT \cite{wang2016cnnpack} or FFT \cite{mathieu2013fast,vasilache2014fast} based frequency domain acceleration, and parameter quantization \cite{gong2014compressing,courbariaux2015binaryconnect,kim2015compression,rastegari2016xnor,cheng2018quantized}. 
Its another advantage lies in reducing the energy consumption \cite{yang2017designing}, which is not only influenced by the parameter amounts, but also influenced by the FLOPs and memory access of input/output feature maps. 
From this perspective, filter pruning can directly remove FLOPs and the intermediate activations of filters (\emph{e.g.}, output feature maps and their corresponding filter channels in the next layer), which substantially reduces the energy consumption. 

However, there are still open issues in the existing filter pruning schemes.
Concretely, how to adaptively and efficiently select important filters to reconstruct the \emph{global output}, \emph{i.e.}, the probabilistic ``softmax'' output after local filter pruning, is still an open problem, with only few works in the literature. 
For instance, the work in \cite{li2017pruning} proposed a magnitude based criterion to 
prune Convolutional filters with small $\ell_1$-norms. 
It has resulted in structured sparse patterns\footnote{Structured sparsity is to directly prune the filter/block (\emph{i.e.}, set the values of filter/block to zeros), while non-structured sparsity only justify whether each element in the filter is zero or not.} that can accelerate the online inference. 
However, such a magnitude-based measurement (\emph{e.g.}, $\ell_1$-norm) is too simple and inefficient to determine the importance of each filter, due to the existence of nonlinear activation functions (\emph{e.g.}, rectifier linear unit (ReLU) \cite{nair2010rectified}) and other complex operations (\emph{e.g.}, pooling and batch normalization \cite{ioffe2015batch}). 
To explain, filters with small $\ell_1$-norm values may have large responses in the output. For example, given two filter vectors $\mathbf{a}=(0.1,0.5,1)^\top$ and $\mathbf{b}=(0.2,0.1,0)^\top$ with an input $\mathbf{c}=(1,0,0)^\top$, we have $\|\mathbf{a}\|_1>\|\mathbf{b}\|_1$, while $\text{ReLU}(\mathbf{c}*\mathbf{a})<\text{ReLU}(\mathbf{c}*\mathbf{b})$ after convolutional operator and ReLU activation. 
Very recently, Luo \emph{et al.} \cite{luo2017ThiNet} implicitly associated the convolutional filter of each layer to its input channel of the next layer, upon which filter pruning is done by selecting input channels that have minimal local reconstruction error. 
The small reconstruction error, however, might be magnified and propagated in the deep networks, leading to large reconstruction error in the global outputs. 
A Taylor expansion based criterion \cite{molchanov2017pruning} was further proposed to iteratively prune one feature map and its associated filter. The pruned network is then fine-tuned to reduce the accuracy drop. However, such a scheme is unadaptive and costly in pruning the entire network. 
Group sparsity \cite{anwar2015structured,lebedev2015fast,wen2016learning,torfi2018attention,wang2018novel} was introduced to select unimportant filters by the Stochastic Gradient Descent (SGD). However, SGD is less efficient in convergence, which is also less efficient to generate the structured output of the pruned filters.  

 
In this paper, we propose an efficient filter pruning scheme, termed \emph{Structured Sparsity Regularization} (SSR), which can efficiently and adaptively prune a group of convolutional filters to minimize the classification error of the global output. 
Compared to the existing works of sequential filter pruning \cite{luo2017ThiNet,li2017pruning,wen2016learning,molchanov2017pruning,hu2016network}, we incorporate the structured sparsity constraint into the objective function of global output to model the correlation between the global output loss and the local filter removal, which produces a structured network with fast computing and light memory consumption\footnote{To make a fair comparison, we only evaluate the computational cost of model for online inference, without including the training}. In particular, we propose two different kinds of structured sparse regularizer for adaptive filter pruning, \emph{i.e.,} $\ell_{2,1}$-norm \cite{yuan2006model} and $\ell_{2,0}$-norm. 
The $\ell_{2,0}$-norm of a matrix $\mathbf{A}$ is defined as $\|\mathbf{A}\|_{2,0} = \sum_i\|\sqrt{\sum_j\mathbf{A}_{ij}^{2}}\|_0$, where for a scalar $a$, $\|a\|_0=0$ if $a=0$, and $\|a\|_0=1$ otherwise.
The $\ell_{2,1}$-norm is a convex norm to approximate the cardinality in filter selection, while the $\ell_{2,0}$-norm selects filter with the constraint of explicit cardinality, which is the most natural constraint. As for group sparsity with $\ell_{2,0}$-regularization, SGD in the previous works \cite{anwar2015structured,lebedev2015fast,wen2016learning,torfi2018attention} cannot solve the NP-hard problem for effective filter pruning.
To this end, we propose a novel Alternative Updating with Lagrange Multipliers (AULM), which handles the convergence difficulty with $\ell_{2,1}$-norm using SGD and the NP-hard problem with $\ell_{2,0}$-norm. AULM follows the principle of ADMM \cite{boyd2011distributed} by splitting the optimization problem into tractable sub-problems that can be solved efficiently. In addition, AULM has a faster convergence than ADMM by adding a Nesterov's optimal method \cite{nesterov1983method}, can effectively identify the importance of filters, and then updates parameters by alternating between promoting the structured sparsity and optimizing the recognition loss. 
In particular, the proposed solver circumvents the sparse constraint evaluations during the standard back-propagation step, which makes the implementation very practical. 
Moreover, compared to SSL \cite{wen2016learning}, the proposed solver is much faster to the existing solvers in offline pruning, \emph{i.e.}, almost $2.5\times$ than directly solving the regularizer with $\ell_{2,1}$-norm by using SGD. 
Fig.\,\ref{fig1} shows the proposed filter pruning framework.

Quantitatively, we demonstrate the advantage of the proposed SSR scheme using five widely-used models (\emph{i.e.}, LeNet, AlexNet, VGGNet-16, ResNet-50 and GoogLeNet) on two datasets (\emph{i.e.}, MNIST and ImageNet 2012). 
Compared to several state-of-the-art filter pruning methods \cite{luo2017ThiNet,li2017pruning,wen2016learning,molchanov2017pruning,torfi2018attention,hu2016network,yoon2017combined}, the proposed scheme performs drastically better, \emph{i.e.}, $5.5\times$ CPU speedup and $39.1\times$ compression with a negligible classification accuracy loss for LeNet on MNIST, $2.1\times$ CPU speedup with an increase of 1.28\% Top-5 classification error for AlexNet, $2.4\times$ GPU speedup with an decrease (even better) of 1.65\% Top-1 classification error for VGG-16, $1.9\times$ CPU speedup and $2.13\times$ compression with an increase of 3.65\% Top-1 classification error for ResNet-50, and $1.6\times$ CPU speedup and $1.7\times$ compression with an increase of 1.05\% Top-5 classification error for GoogLeNet in ImageNet 2012. 
Moreover, the pruned AlexNet and VGG-16 can be further compressed by replacing the original fully-connected layers with global average pooling \cite{lin2014network}, leading to $24.96\times$ and $15\times$ compression rates only with increases of 4.62\% and 0.27\% Top-5 classification error, respectively.

In addition, we also explore the generalization ability of SSR-based compressed model in more complex tasks, \emph{i.e.}, domain adaptation and object detection. Experimental results demonstrate that such a compressed model has achieved 3.5$\times$ FLOPs reduction and 15.4$\times$ compression only with an increase of 1.95\% Top-1 error on the task of domain adaptation, as well as 0.3\% mAP drops with a factor of 2.45$\times$ GPU speedup on the task of object detection. These two results are highly competitive comparing to the state-of-the-art filter pruning methods.

\section{Related Work}
Early works in network compression mainly focus on compressing the fully-connected layers \cite{lecun1989optimal,hassibi1993second,srinivas2015data,han2015deep,han2015learning}. For instance, LeCun \emph{et al.} \cite{lecun1989optimal} and Hassibi \emph{et al.} \cite{hassibi1993second} proposed a saliency measurement by computing the Hessian matrix of the loss function with respect to the parameters, based on which network parameters with low saliency values are pruned. 
Srinivas and Babu \cite{srinivas2015data} explored the redundancy among neurons to remove a subset of neurons without retraining. 
Han \emph{et al.} \cite{han2015deep,han2015learning} proposed a pruning scheme based on low-weight connections to reduce the total amount of parameters in CNNs. 
However, these methods only reduce the memory footprint and do not guarantee to reduce the computation time, since the time consumption is mostly dominated by the convolutional layers. 
Moreover, the above pruning schemes typically produce non-structured sparse CNNs that lack flexibility to be applied across different platforms or libraries. 
For example, the Compressed Sparse Column (CSC) based weight formation has to change the original format of weight storing in Caffe \cite{jia2014caffe} after pruning, which cannot be well supported across different platforms.

To reduce the computation cost of convolutional layers, a popular solution is to decompose convolutional filters into a sequence of tensors with fewer parameters \cite{denton2014exploiting,jaderberg2014speeding,kim2015compression,lebedev2014speeding,lin2017espace,huang2018ltnn}. 
The convolution can be also conducted in the frequency domain using DCT \cite{wang2016cnnpack} and FFT \cite{mathieu2013fast,vasilache2014fast}, or approximated by balanced decoupled spatial convolution \cite{xie2019balanced} to reduce the redundancy of spatial and channel information. Besides, binarization of weights \cite{courbariaux2015binaryconnect,courbariaux2016binarynet,rastegari2016xnor} and low-complexity weights \cite{cintra2018low} can be also employed in the convolutional layers to reduce the computation overheads with multiplication-free operations. 
Designing a compact filter is also able to accelerate the convolutional computation by replacing the over-parametric filters with a compact block, such as inception module in GoogLeNet \cite{szegedy2015going}, bottleneck module in ResNet \cite{he2016deep}, fire module in SqueezeNet \cite{SqueezeNet}, group convolution \cite{krizhevsky2012imagenet,zhang2018shufflenet,huang2018condensenet}, depth-wise separable convolution \cite{howard2017mobilenets,sandler2018mobilenetv2,chollet2017xception}. 
Without incurring additional overheads, our scheme can be integrated with the above schemes to further speedup the computation, since the above schemes are orthogonal to the core contribution of this paper.

In line with our work, some recent works have investigated structured pruning to remove redundant filters or feature maps, which can be categorized into either greedy based pruning \cite{li2017pruning,molchanov2017pruning,luo2017ThiNet,he2018soft,lin2018accelerating} or sparsity regularization based pruning \cite{anwar2015structured,lebedev2015fast,wen2016learning,torfi2018attention,yoon2017combined,liu2017learning}. 
For the former group, the work in \cite{li2017pruning} proposed a magnitude based pruning to prune filters with their corresponding feature maps by measuring the $\ell_1$-norm of filters, which is however inefficient in determining the importance of filters. He \emph{et al.} \cite{he2018soft} proposed an $\ell_2$-norm criterion to prune unsalient filters in a soft manner. 
Luo \emph{et al.} \cite{luo2017ThiNet} explored the importance of the input channel from the next convolutional layer, based on which conducted a local channel selection to prune unimportant input channels and the corresponding filters in the current layer. 
However, the small local reconstruction error might lead to large error in the global output by propagating through the deep network. 
A Taylor expansion based criterion was proposed in \cite{molchanov2017pruning} to iteratively prune one filter and then fine-tune the pruned network, which is however prohibitively costly for deep networks. 
Lin \emph{et al.} \cite{lin2018accelerating} proposed a global and dynamic pruning scheme to reduce redundant filters by greedy alternative updating.
Alternatively, group sparsity based regularization was proposed in \cite{anwar2015structured,lebedev2015fast,wen2016learning,torfi2018attention,wang2018novel} to penalize unimportant parameters and prune redundant filters directly by using SGD, which is also very slow in convergence for filter selection. 
To reduce redundancies in the model parameters, the combination of group and exclusive sparsity was proposed in \cite{yoon2017combined} to promote sharing and competition for features, respectively.
Different from these group sparsity based regularization, we investigate the structured sparsity of filters instead, including $\ell_{2,1}$-regularization and $\ell_{2,0}$-regularization. 
And the proposed AULM solver alternates the updating between promoting the structured sparsity and optimizing the recognition loss. By this way, AULM effectively overcomes the difficulty of convergence with $\ell_{2,1}$-regularization on SGD, and also can solve the NP-hard problem with $\ell_{2,0}$-regularization. 
Quantitatively, such an innovation has achieved much faster convergence and generated much more structured filters during training.
Recently, Liu \emph{et al.} \cite{liu2017learning} have proposed a network slimming scheme to associate a scaling factor in batch normalization with each filter channel, and imposed $\ell_1$-reguralization on these scaling factors to identify and prune unimportant channels. Different from network slimming, we directly focus on structured filter sparsity by $\ell_{2,1}$-regularization and $\ell_{2,0}$-regularization for pruning the complete filters.

\section{Structured Pruning via SSR}
In this section, we first describe the notations and preliminaries. Next, we present the general framework of structured filter pruning. Then, we present the proposed structured sparsity regularization scheme. Afterwards, AULM base solver is presented to perform the corresponding optimization. Finally, we discuss how to deploy our pruning strategy on the residual networks.

\subsection{Notations and Preliminaries}
Consider a CNN model consisting of $L$ layers in total (including convolutional and fully-connected layers), which are interlaced with rectifier linear units and pooling.
For the convolution operation, an input tensor $\mathcal{I}^l$ of size $H_l\times W_l\times C_l$ is transformed into an output tensor $\mathcal{O}^l$ of size $H_{l}^{'}\times W_{l}^{'}\times C_{l+1}$ by the following linear mapping at the $l$-th layer:
\begin{equation}
\label{eq1}
\mathcal{O}_{h',w',n}^{l} = \sum\limits_{i = 1}^{d_l}\sum\limits_{j=1}^{d_l}\sum\limits_{c=1}^{C_l}\mathcal{K}_{i,j,c,n}^{l} \mathcal{I}_{h_i,w_j,c}^{l},
\end{equation}
where the convolutional filter $\mathcal{K}^l$ at the $l$-th layer is a tensor of size $d_l\times d_l\times C_l\times C_{l+1}$. The spatial location of its output are denoted as $h'=h_i-i+1$ and $w'=w_j-j+1$, respectively. For simplicity, we assume a unit stride without zero-padding and skip biases. 

In practice, many deep learning frameworks (\emph{e.g.}, Caffe \cite{jia2014caffe} and Tensorflow \cite{abadi2016tensorflow}) compute tensor-based convolutional operator by a highly optimized matrix-by-matrix multiplication using linear algebra packages, such as Intel MKL and OpenBLAS. 
For example, an input tensor of size $H_l \times W_l \times C_l$ can be transformed into an input patch matrix $\mathbf{I}^l$ of size $(d_l\times d_l\times C_l)\times H_{l}^{'}W_{l}^{'}$ using \emph{im2col} operator. 
The columns of $\mathbf{I}^l$ are patch elements of the input tensor with size $d_l\times d_l\times C_l$. 
Correspondingly, convolutional filter is transformed into a filter matrix $\mathbf{K}^l$ of size $C_{l+1}\times (d_l\times d_l\times C_l)$ using \emph{reshaped} operator. 
Then, the output tensor can be obtained by reshaping the result matrix of size $C_{l+1}\times H_{l}^{'}W_{l}^{'}$, which is the result of multiplying filter matrix $\mathbf{K}^l$ with input patch matrix $\mathbf{I}^l$.
In this paper, we use Tensorflow to train and test our structured sparse CNNs. Therefore, we replace tensor-based filters $\mathcal{K}^l$ with matrix-based $\mathbf{K}^l$. 

In addition, we consider several norms of filter matrix $\mathbf{K}^l$, which are used in the regularization term. For example, the Frobenius norm of filter matrix $\mathbf{K}^l$ is defined as $\|\mathbf{K}^l\|_F\coloneqq\sqrt{\sum_{i,j}\mathbf{K}_{ij}^{l^2}}$. The sparsity inducing $\ell_1$-norm is defined as $\|\mathbf{K}^l\|_{1,1}\coloneqq\sum_{i=1}^{C_{l+1}}\|\mathbf{K}_i^l\|_1$. In this paper, we introduce two different structured sparsity norms to adaptively select unimportant filters to be pruned, \emph{i.e.}, $\ell_{2,1}$-norm and $\ell_{2,0}$-norm, which are denoted as $\|\mathbf{K}^l\|_{2,1}\coloneqq\sum_{i=1}^{C_{l+1}}\|\mathbf{K}_i^l\|_2$ and $\|\mathbf{K}^l\|_{2,0} = \sum_i^{C_{l+1}}\|\sqrt{\sum_j^{C_l}\mathbf{K}_{ij}^{l^2}}\|_0$, respectively. Note that $\ell_{2,0}$-norm is not a valid norm because it does not satisfy the positive scalarbility: $\|\alpha\mathbf{K}^l\|_{2,0} = |\alpha|\|\mathbf{K}^l\|_{2,0}$ for any scalar $\alpha$. The term ``norm'' here is for convenience.

\subsection{The framework of SSR}
SSR prunes the least important filters from a trained convolutional network to reduce the computation and memory costs. 
Its procedure consists of three basic operations, \emph{i.e.}, (1) evaluate the importance of each filter, (2) prune unimportant filters, and (3) fine-tune the whole network.
Differing from the previous filter pruning, we adaptively select unimportant filters to be pruned by using AULM. 
As shown in Fig.\,\ref{fig1}, We focus on the blue dotted boxes that perform AULM solver to adaptively prune unimportant filters, and then remove the corresponding output feature maps and the filter channels in the next layer. 
We present the principle steps of SSR as below:
\begin{itemize}
\item[1.] \emph{Automatic filter selection.} We design a novel objective function, which incorporates the structured sparsity constraint into the data error term, \emph{e.g.}, cross-entropy between the probability of data inference and the ground truth. The optimization problem can be solved by the proposed AULM solver. Thus, the unimportant filters are adaptively identified during the training.
\item[2.] \emph{Pruning.} We prune unimportant filters and their corresponding feature maps, together with the channels of filters in the next layer.
\item[3.] \emph{Updating.} We update the rest filters and feature maps in the current layer, as well as the channels of filters in the next layer.
\item[4.] \emph{Iteration of Step 1 to prune the next layer.}
\item[5.] \emph{Global fine-tuning}. We globally fine-tune the pruned networks, which recovers the discriminability and generalization ability for the pruned networks.
\end{itemize}

\subsection{The Objective Function of SSR}
Instead of directly pruning filters by calculating their corresponding magnitudes \cite{li2017pruning,hu2016network}, SSR utilizes the structured sparsity to seek a best trade-off between loss minimization and filter selection. We consider the following objective function:
\begin{equation}
\label{eq2}
\min_{\mathbf{K}} \mathcal{L}\big(\mathbf{Z}, f(\mathbf{X};\mathbf{K})\big)+\lambda  g(\mathbf{K}).
\end{equation}
Here $\mathbf{K}$ represents the collection of all weights in CNNs. 
$\mathcal{L}\big(\mathbf{Z}, f(\mathbf{X};\mathbf{K})\big)$ is the cross-entropy loss for classification or mean-squared error for regression between the labels of ground truth $\mathbf{Z}$ and the output of the last layer in CNNs $f(\mathbf{X};\mathbf{K})$\footnote{For simplicity, the term of weight decay (\emph{i.e.}, non-structured regularization applying on every weight, \emph{e.g.}, $\ell_2$-norm) is omitted, since it can be directly incorporated into the loss function and does not affect the result of structured sparsity regularization.}, where $\mathbf{D} = \big\{\mathbf{X}, \mathbf{Z}\big\} = \big\{\mathbf{X}_i, \mathbf{Z}_i\big\}_{i=1}^{N}$ is a training dataset with $N$ instances. We denote the first loss term in Eq.\,(\ref{eq2}) as $\mathcal{L}_D(\mathbf{K})$ for simplicity. 
The term $g(\mathbf{K})$ is certain structured sparsity regularization on the total size of remaining filters in each iteration. In this paper, we consider two different kinds of structured sparsity regularizer, \emph{i.e.,} $\ell_{2,1}$-norm and $\ell_{2,0}$-norm.

Parameter $\lambda$ is the penalty term of structured sparsity. As $\lambda$ varies, the solution of Eq.\,(\ref{eq2}) traces a trade-off path between the performance and structured sparsity. 
Note that the $\ell_{2,0}$ or $\ell_{2,1}$-norm based structured sparse regularizer $g(\mathbf{K})$ and its efficient solver in Eq.\,(\ref{eq2}) is non-trivial.

\subsection{The AULM solver}
The proposed AULM solver aims at solving the problem of structure-sparsity regularization, which is inspired by ADMM in the field of distributed optimization. Different from ADMM, we focus on selecting and pruning unimportant filters by solving a non-convex optimization problem with the $\ell_{2,1}$-regularization and an NP-hard problem with the $\ell_{2,0}$-regularization.
In particular, to handle the non-trival regularizer in Eq.\,(\ref{eq2}), we introduce a slack variable and an equality constraint as follows:
\begin{equation}
\label{eq3}
\begin{split}
& \min_{\mathbf{K},\mathbf{F}} \mathcal{L}_D(\mathbf{K})+\lambda g(\mathbf{F}) \\
& \emph{s.t.}\quad \mathbf{K} = \mathbf{F}.
\end{split}
\end{equation}
 
AULM is an iterative method that augments the Lagrangian function with quadratic penalty terms.
The augmented Lagrangian associated with the constrained problem of Eq.\,(\ref{eq3}) is given by:
\begin{small}
\begin{equation}
\label{eq4}
\begin{split}
&\mathcal{L}_{(\mathbf{K},\mathbf{F},\mathbf{Y})}=\mathcal{L}_D(\mathbf{K})+\sum\limits_{l=1}^{L}\lambda g(\mathbf{F}^l) \\
& \quad +\sum\limits_{l=1}^{L}\text{trace}\big(\mathbf{Y}^{l^\top}(\mathbf{K}^l-\mathbf{F}^l)\big) + \sum\limits_{l=1}^{L}\frac{\rho}{2}\|\mathbf{K}^l-\mathbf{F}^l\|_{F}^{2},
\end{split}
\end{equation}
\end{small}

\noindent where $\mathbf{K}^l, \mathbf{F}^l, \mathbf{Y}^l$ are the filter kernel, the intermediate filter with structured sparsity, and the dual variables (\emph{i.e.} the lagrange multipliers) at the $l$-th layer, respectively. $\rho > 0$ is a penalty parameter. To minimize Eq.\,(\ref{eq4}), AULM solves for each variable via a sequence of iterative computations:
\begin{itemize}
\item[1.] Employ gradient descent to minimize the loss of
\begin{equation}
\label{eq5}
\mathbf{K}^{\{k+1\}} = \mathop{\min}_{\mathbf{K}}\mathcal{L}\Big(\mathbf{K}, \hat{\mathbf{F}}^{\{k\}}, \hat{\mathbf{Y}}^{\{k\}}\Big).
\end{equation} 
\item[2.] Find the closed-form solution of the structured sparsity:
\begin{equation}
\label{eq6}
\mathbf{F}^{\{k+1\}} = \mathop{\min}_{\mathbf{F}}\mathcal{L}\Big(\mathbf{K}^{\{k+1\}}, \mathbf{F}, \hat{\mathbf{Y}}^{\{k\}}\Big).
\end{equation}
\item[3.] Update the dual variables $\mathbf{Y}^{l}$ using gradient ascent with a step-size equal to $\rho$, \emph{i.e.},
\begin{equation}
\label{eq7}
\mathbf{Y}^{\{k+1\}}=\hat{\mathbf{Y}}^{\{k\}}+\rho\Big(\mathbf{K}^{\{k+1\}}-\mathbf{F}^{\{k+1\}}\Big).
\end{equation}
\item[4.] Conduct an overrelaxation step for accelerated variables $\hat{\mathbf{F}}$ and $\hat{\mathbf{Y}}$ with a step-size equal to $\gamma$, \emph{i.e.},
\begin{equation}
\label{eq_tmp1}
\hat{\mathbf{Y}}^{\{k+1\}} = \mathbf{Y}^{\{k+1\}} + \gamma^{\{k+1\}}\Big(\mathbf{Y}^{\{k+1\}} - \mathbf{Y}^{\{k\}}\Big),
\end{equation}
\begin{equation}
\label{eq_tmp2}
\hat{\mathbf{F}}^{\{k+1\}} = \mathbf{F}^{\{k+1\}} + \gamma^{\{k+1\}}\Big(\mathbf{F}^{\{k+1\}} - \mathbf{F}^{\{k\}}\Big),
\end{equation}
\noindent where $\gamma^{\{k+1\}} = k/(k+r)$, with $r\geq 3$ ($r = 3$ is the standard choice). 
\end{itemize} 
\noindent The above four steps are applied in an alternating manner. Below we describe the details of step 1 and step 2 to obtain $\mathbf{K}$ and $\mathbf{F}$. The proposed alternative optimization is summarized in Alg.\,\ref{alg1}. By overrelaxing the Lagrange multiplier variables after each iteration, AULM not only has a faster convergence, but also obtains a more effective solution compared to ADMM.

\begin{algorithm}[t]
\small
\caption{AULM for structured pruning CNN}
\renewcommand{\algorithmicrequire}{\textbf{Input:}} 
\renewcommand{\algorithmicensure}{\textbf{Output:}}
\begin{algorithmic}[1]
\REQUIRE 
Training data points $\mathbf{D}$,
pre-trained CNN weights $\mathbf{K}$,
a set of regularization factors $\mathcal{S}$.
\ENSURE 
The structured pruning filters $\mathbf{K}$.\\
\STATE
Initialize: dual variables $\hat{\mathbf{Y}}=\mathbf{Y}=\textbf{0}$, $\hat{\mathbf{F}}=\mathbf{F}=\mathbf{K}$, and $\rho=1$.
\FOR{ each $\lambda$ in $\mathcal{S}$} 
\FOR{ each $l$ in $[1,L]$}
\REPEAT 
\STATE 
 \textbf{Step 1:} Find the estimation of $\mathbf{K}^{l^{\{k+1\}}}$ by solving the problem in Eq.\,(\ref{eq8_tmp}) using SGD;\\ 
\STATE
 \textbf{Step 2:} Find the structured sparsity estimation of $\mathbf{F}^{l^{\{k+1\}}}$ with $\ell_{2,1}$-norm or $\ell_{2,0}$-norm from Eq.\,(\ref{eq11}) or Eq.\,(\ref{eq12}), respectively;\\
\STATE 
 \textbf{Step 3:} Update dual variables $\mathbf{Y}^{l^{\{k+1\}}}$  by Eq.\,(\ref{eq7}).
\STATE 
 \textbf{Step 4:} Update accelerated variables $\hat{\mathbf{Y}}^{l^{\{k+1\}}}$ and $\hat{\mathbf{F}}^{l^{\{k+1\}}}$  by Eq.\,(\ref{eq_tmp1}) and Eq.\,(\ref{eq_tmp2}), respectively.
\UNTIL{$\|\mathbf{K}^{l^{\{k+1\}}}-\mathbf{F}^{l^{\{k+1\}}}\|_F\leq\epsilon$, or $\|\mathbf{F}^{l^{\{k+1\}}}-\mathbf{F}^{l^{\{k\}}}\|_F\leq\epsilon$.} 
\STATE
Prune filters $\mathbf{K}^l$ corresponding to the row-index of $\mathbf{F}^l$ with zeros and their corresponding feature maps.
\ENDFOR
\STATE
Fine-tune the pruned network.
\ENDFOR 
\end{algorithmic}
\label{alg1}
\end{algorithm}

\subsubsection{The Updating of AULM}
\label{ssub_3_4_1}
\textbf{Step 1. } By removing the penalty terms of $g(\mathbf{F}^l)$ and completing the squares with respect to $\mathbf{K}$ in Eq.\,(\ref{eq4}), we obtain the following equivalent problem to Eq.\,(\ref{eq5}):
\begin{small}
\begin{equation}
\label{eq8}
\min_{\mathbf{K}}\mathcal{L}_D(\mathbf{K})+\sum\limits_{l=1}^{L}\frac{\rho}{2}\|\mathbf{K}^l-\mathbf{T}_1^l\|_{F}^{2},
\end{equation}
\end{small}

\noindent where $\mathbf{T}_1^l=\mathbf{F}^l-\frac{1}{\rho}\mathbf{Y}^l$. To obtain the sub-optimal filters $\mathbf{K}$ in the layer-wise pruning framework, we separately update $\mathbf{K}^l$ in the $l$-th layer with the following optimization problem:
\begin{small}
\begin{equation}
\label{eq8_tmp}
\min_{\mathbf{K}^l}\mathcal{L}_D(\mathbf{K}^l)+\frac{\rho}{2}\|\mathbf{K}^l-\mathbf{T}_1^l\|_{F}^{2}.
\end{equation}
\end{small}

\noindent We use Stochastic Gradient Descent (SGD) to optimize the filters $\mathbf{K}^l$, which is a reasonable choice to handle such a high-dimensional optimization. The entire procedure relies mainly on the standard forward-backward pass. 

\textbf{Step 2. } By removing the first term $\mathcal{L}_D(\mathbf{K})$ and completing the squares with respect to $\mathbf{F}$ in Eq.\,(\ref{eq4}), we obtain the following equivalent problem to Eq.\,(\ref{eq6}):
\begin{equation}
\label{eq9}
\min_{\mathbf{F}}\sum\limits_{l=1}^{L}\lambda g(\mathbf{F}^l)+\sum\limits_{l=1}^{L}\frac{\rho}{2}\|\mathbf{F}^l-\mathbf{T}_2^l\|_{F}^{2},
\end{equation}

\noindent where $\mathbf{T}_2^l=\mathbf{K}^l+\frac{1}{\rho}\mathbf{Y}^l$. We update $\mathbf{F}^l$ layer-by-layer instead of directly updating the whole layers. Hence, we get the following optimization problem at the $l$-th layer:
\begin{equation}
\label{eq10}
\min_{\mathbf{F}^l}\lambda g(\mathbf{F}^l)+\frac{\rho}{2}\|\mathbf{F}^l-\mathbf{T}_2^l\|_{F}^{2}.
\end{equation}
Based on Eq.\,(\ref{eq10}), we can obtain a closed-form solution by considering the following regularizers $g(\mathbf{F}^l)$:
\begin{itemize}
\item\textbf{$\ell_{2,1}$-norm.} A closed-form solution of Eq.\,(\ref{eq10}) can be derived, which is evaluated row-by-row on $\mathbf{T}_2^l$ \cite{cotter2005sparse,goldstein2014field}. The $i$-th row is calculated via:
\begin{equation}
\label{eq11}
{\mathbf{F}_i^l} = {\mathbf{T}_2^l}_i\frac{\max\{\|{\mathbf{T}_2^l}_i\|_2-\frac{\lambda}{\rho}, 0\}}{\|{\mathbf{T}_2^l}_i\|_2}.
\end{equation}
\item\textbf{$\ell_{2,0}$-norm.} A closed-form solution of Eq.\,(\ref{eq10}) with this regularizer can be also derived, which is evaluated row-by-row on $\mathbf{T}_2^l$ in \textbf{Theorem \ref{the1}}. The $i$-th row is calculated via:
\begin{equation}
\label{eq12}
{\mathbf{F}_i^l}=\left \{ \begin{matrix}
\mathbf{0}, & \lambda\geq\frac{\rho}{2}\|{\mathbf{T}_2^l}_i\|_2^2\\
{\mathbf{T}_2^l}_i, & \lambda<\frac{\rho}{2}\|{\mathbf{T}_2^l}_i\|_2^2.\\
\end{matrix}
\right.
\end{equation}
\item\textbf{$\ell_{1}$-norm.} The norm is not a structural constraint, which leads to unstructured sparsity by solving the problem in Eq.\,(\ref{eq10})\footnote{Here, we consider the $\ell_1$-norm to better validate the effectiveness of simultaneously accelerating the computation and compressing the memory overhead of CNNs by the aforementioned structured sparsity.}. Specifically, we obtain a closed-form solution of Eq.\,(\ref{eq10}) with the constraint by evaluating on each entry of $\mathbf{T}_2^l$. The optimal solution ${F_{ij}^l}^{*}$ is obtained via: 
\begin{equation}
\label{eq13}
{F_{ij}^l} = sign({T_2^l}_{ij})\max\{{|T_2^l}_{ij}|-\lambda/\rho, 0\}.
\end{equation}
\noindent where $sign(\cdot)$ is an indicator function, \emph{i.e.}, 
\begin{displaymath}
sign({T_2^l}_{ij})=\left \{ \begin{matrix}
1, & {T_2^l}_{ij}\geq 0,\\
-1, & otherwise.\\
\end{matrix}
\right .
\end{displaymath}
\end{itemize}

\begin{theorem} 
\label{the1}
Let $g(\mathbf{F}^l)$ be a regularizer by $\|\mathbf{F}^l\|_{2,0}$, then the optimal solution of Eq.\,(\ref{eq10}) is given by Eq.\,(\ref{eq12}), where $\mathbf{F}^l = ({\mathbf{F}_1^{l}}, {\mathbf{F}_2^{l}}, \cdots, {\mathbf{F}_{C_{l+1}}^{l}})^{\top}$ and $\mathbf{T}_2^l = ({\mathbf{T}_2^l}_1, {\mathbf{T}_2^l}_2, \cdots, {\mathbf{T}_2^l}_{C_{l+1}})^{\top}$.
\end{theorem}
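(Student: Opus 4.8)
The plan is to reduce the minimization in Eq.\,(\ref{eq10}) with $g(\mathbf{F}^l)=\|\mathbf{F}^l\|_{2,0}$ to $C_{l+1}$ independent one-row subproblems and to solve each in closed form by a two-case comparison. First I would observe that, under the row decompositions $\mathbf{F}^l=({\mathbf{F}_1^{l}},\dots,{\mathbf{F}_{C_{l+1}}^{l}})^{\top}$ and $\mathbf{T}_2^l=({\mathbf{T}_2^l}_1,\dots,{\mathbf{T}_2^l}_{C_{l+1}})^{\top}$, both terms of the objective are separable over rows: $\|\mathbf{F}^l-\mathbf{T}_2^l\|_F^2=\sum_{i=1}^{C_{l+1}}\|{\mathbf{F}_i^l}-{\mathbf{T}_2^l}_i\|_2^2$, and by definition of $\|\cdot\|_{2,0}$ the penalty $\|\mathbf{F}^l\|_{2,0}$ equals the number of nonzero rows of $\mathbf{F}^l$, hence a sum over $i$ of a term that depends only on ${\mathbf{F}_i^l}$. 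Therefore the problem decouples, and it suffices to minimize, for each fixed $i$, the function $\phi_i({\mathbf{F}_i^l})=\lambda\,\mathbf{1}\{{\mathbf{F}_i^l}\neq\mathbf{0}\}+\tfrac{\rho}{2}\|{\mathbf{F}_i^l}-{\mathbf{T}_2^l}_i\|_2^2$ over ${\mathbf{F}_i^l}\in\mathbb{R}^{C_l}$, where $\mathbf{1}\{\cdot\}$ is $1$ when its argument holds and $0$ otherwise.

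Next I would split the per-row problem by the value of the indicator. On the branch ${\mathbf{F}_i^l}=\mathbf{0}$ the objective equals $\tfrac{\rho}{2}\|{\mathbf{T}_2^l}_i\|_2^2$. On the branch ${\mathbf{F}_i^l}\neq\mathbf{0}$ the penalty contributes the constant $\lambda$, and the residual term $\tfrac{\rho}{2}\|{\mathbf{F}_i^l}-{\mathbf{T}_2^l}_i\|_2^2$ is a strictly convex quadratic whose unconstrained minimizer is ${\mathbf{F}_i^l}={\mathbf{T}_2^l}_i$; as long as ${\mathbf{T}_2^l}_i\neq\mathbf{0}$ this point lies in the branch, so the branch minimum is exactly $\lambda$, attained there. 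Comparing the two candidate values $\tfrac{\rho}{2}\|{\mathbf{T}_2^l}_i\|_2^2$ and $\lambda$ then shows that ${\mathbf{F}_i^l}=\mathbf{0}$ is optimal when $\lambda\ge\tfrac{\rho}{2}\|{\mathbf{T}_2^l}_i\|_2^2$ and ${\mathbf{F}_i^l}={\mathbf{T}_2^l}_i$ is optimal when $\lambda<\tfrac{\rho}{2}\|{\mathbf{T}_2^l}_i\|_2^2$, which is exactly Eq.\,(\ref{eq12}); stacking the rows recovers the stated $\mathbf{F}^l$.

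The routine algebra is short; the points I would treat with care, and which I regard as the only real subtlety, concern the non-convexity of $\|\cdot\|_{2,0}$. When ${\mathbf{T}_2^l}_i=\mathbf{0}$ the infimum over the nonzero branch equals $\lambda>0$ but is not attained, so the argument must instead note that the zero branch already attains the global minimum value $0$ and that Eq.\,(\ref{eq12}) returns $\mathbf{0}$ there (since $\lambda\ge 0=\tfrac{\rho}{2}\|{\mathbf{T}_2^l}_i\|_2^2$), so the formula remains correct. At the boundary $\lambda=\tfrac{\rho}{2}\|{\mathbf{T}_2^l}_i\|_2^2$ both choices give the same objective, so selecting $\mathbf{0}$ is a legitimate (and the sparser) minimizer. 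Since the two branches exhaust all of $\mathbb{R}^{C_l}$, this case analysis certifies that the closed form in Eq.\,(\ref{eq12}) is a global minimizer of Eq.\,(\ref{eq10}), even though the problem is non-convex and may admit other minimizers on the tie set.
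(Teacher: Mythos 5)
Your proof is correct and follows essentially the same route as the paper: decouple the objective row-by-row and compare the zero branch value $\tfrac{\rho}{2}\|{\mathbf{T}_2^l}_i\|_2^2$ against the nonzero-branch minimum $\lambda$ attained at ${\mathbf{F}_i^l}={\mathbf{T}_2^l}_i$. Your extra care with the ${\mathbf{T}_2^l}_i=\mathbf{0}$ edge case and the tie $\lambda=\tfrac{\rho}{2}\|{\mathbf{T}_2^l}_i\|_2^2$ is a minor refinement the paper glosses over, but the argument is the same.
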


\begin{proof}
\label{pro1}
Since $g(\mathbf{F}^l) = \|\mathbf{F}^l\|_{2,0} = \sum_i\|\sqrt{\sum_j\mathbf{F}_{ij}^{l^2}}\|_0$, we are interested in solving the following problem:
\begin{equation}
\label{eq14}
\min_{\mathbf{F}^l}\lambda\sum_i\|\sqrt{\sum_j\mathbf{F}_{ij}^{l^2}}\|_0+\frac{\rho}{2}\|\mathbf{F}^l-\mathbf{T}_2^l\|_{F}^{2}.
\end{equation}
We rewrite $\mathbf{F}^l$ and $\mathbf{T}_2^l$ as 
$\mathbf{F}^l = ({\mathbf{F}_1^{l}}, {\mathbf{F}_2^{l}}, \cdots, {\mathbf{F}_{C_{l+1}}^{l}})^{\top}$ and $\mathbf{T}_2^l = ({\mathbf{T}_2^l}_1, {\mathbf{T}_2^l}_2, \cdots, {\mathbf{T}_2^l}_{C_{l+1}})^{\top}$,
where ${\mathbf{F}_i^{l}}$ and ${\mathbf{T}_2^l}_i$ are the $i$-th row of $\mathbf{F}^l$ and $\mathbf{T}_2^l$, respectively. Then we solve the following equivalent problem for each row independently to Eq.\,(\ref{eq10}):
\begin{equation}
\label{eq15}
\min_{\mathbf{F}_i^l}\mathcal{L}(\mathbf{F}_i^l)=\lambda\|\mathbf{F}_{i}^{l}\|_0+\frac{\rho}{2}\|{\mathbf{F}_i^{l}}-{\mathbf{T}_2^l}_i\|_{2}^{2}.
\end{equation}
On one hand, $\forall \mathbf{F}_i^l\neq\mathbf{0}, \lambda\|\mathbf{F}_{i}^{l}\|_0=\lambda$. We obtain the optimal value $\mathcal{L}(\mathbf{F}_i^l)=\lambda$, when $\|{\mathbf{F}_i^{l}}-{\mathbf{T}_2^l}_i\|_{2}^{2} = 0$, \emph{i.e.,} $\mathbf{F}_i^{l}={\mathbf{T}_2^l}_i$. 
On the other hand, if $\mathbf{F}_i^l=\mathbf{0}, \lambda\|\mathbf{F}_{i}^{l}\|_0 = 0$, and $\mathcal{L}(\mathbf{F}_i^l)=\frac{\rho}{2}\|{\mathbf{T}_2^l}_i\|_2^2$. Therefore, when $\lambda\geq\frac{\rho}{2}\|{\mathbf{T}_2^l}_i\|_2^2$, the optimal solution is $\mathbf{F}_i^l=\mathbf{0}$, while $\lambda<\frac{\rho}{2}\|{\mathbf{T}_2^l}_i\|_2^2$, the optimal solution is $\mathbf{F}_i^l={\mathbf{T}_2^l}_i$. Thus, we can obtain the optimal solution of Eq.\,(\ref{eq10}), which is given by Eq.\,(\ref{eq12}) by considering the row-wise decoupling property of Eq.\,(\ref{eq10}).

\end{proof}

\begin{figure}[!t]
\begin{center}
\includegraphics[scale=0.47]{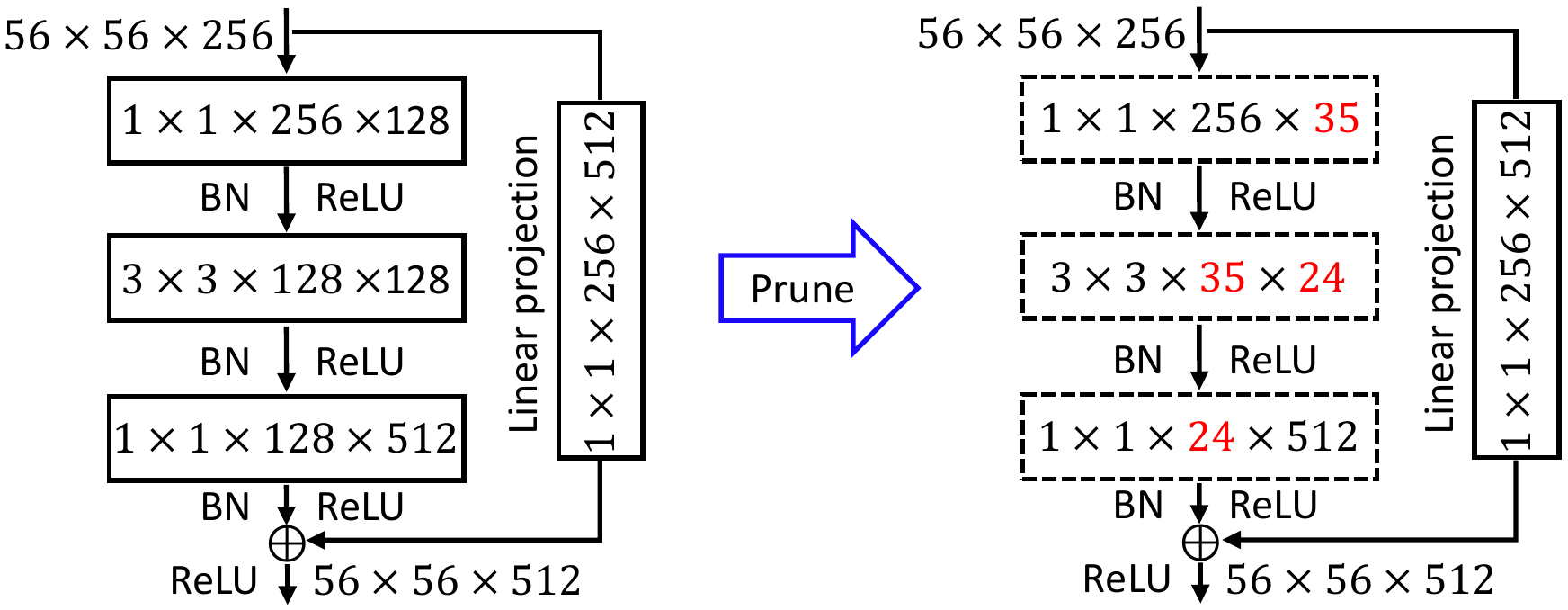}
\end{center}
\vspace{-1em}
\caption{Illustration of pruning ResNet. The red value is the number of remaining filters/channels.}
\label{fig2}
\end{figure}

\subsubsection{The Convergence of AULM}
Since non-linear transition, normalization and pooling commonly occur in CNNs, the objective function of Eq.\,(\ref{eq4}) is highly non-convex, which lacks a theoretical proof to guarantee its convergence to the global optimal. However, it is empirically shown that AULM works well when the penalty parameter $\rho$ is sufficiently large. This is related to the quadratic term that tends to be locally convex by giving a sufficiently large $\rho$. However, if $\rho$ is too large, it is difficult for the iterative solver to take effects. As a trade-off, we set $\rho=1$ in our implementation.

Since the objective function is highly non-convex, there is a risk to be trapped into a local optimum. In our implementation, we circumvent this difficulty by using the pre-trained weights as the initialized weights, which performs quite well in practice. 

\subsection{Pruning on ResNet}
\label{ssec_3_5}
Unlike VGG-16 and AlexNet, there are some restrictions to prune ResNet due to the special residual blocks. 
In general, each residual block with bottleneck structure contains three convolutional layers (followed by both batch normalization and ReLU) and shortcut connections. 
In order to perform the sum operator, the number of output feature maps in the last convolutional layer needs to be consistent with that of the projection shortcut layer. 
In particular, when the dimensions of input/output channels are mismatched in a residual block, a linear projection is performed by the shortcut connections (see \cite{he2016deep} for more details). 

In this paper, we focus on pruning the first two layers in each residual block, as shown in Fig.\,\ref{fig2}. And we do not prune the last convolutional layer of each residual block directly. 
In fact, the parameters (\emph{e.g.}, filter channel) in the last convolutional layer are much fewer, as a large proportion of filters in the second layer has been pruned.

\section{Experiments}
\subsection{Experimental Setups}
\label{ssec_4_1}
\textbf{Models and datasets. }
We conduct comprehensive experiments using five convolutional networks on two datasets, \emph{i.e.}, LeNet on MNIST \cite{lecun1998gradient}, AlexNet \cite{krizhevsky2012imagenet}, VGG-16 \cite{simonyan2014very}, ResNet-50 \cite{he2016deep}  and GoogLeNet \cite{szegedy2015going} on ImageNet \cite{ILSVRC15}. 
We implement the proposed SSR scheme with Tensorflow\footnote{Our source code is available at https://github.com/ShaohuiLin/SSR.}\cite{abadi2016tensorflow}. 
All pre-trained CNNs except LeNet are taken from the Caffe model zoo\footnote{https://github.com/BVLC/caffe/wiki/Model-Zoo}. We make use of an open source tool\footnote{https://github.com/ethereon/caffe-tensorflow} to convert the pre-trained models to Tensorflow format\footnote{The accuracies of models may be slight different from that reported by other works, due to a different learning framework.} and then fine-tune them to restore the accuracy. We train LeNet from scratch and report the results in Table\,\ref{tab1}.

\textbf{Implementations. }
To train the proposed SSR scheme, we use a learning rate of 0.001 with a constant dropping factor of 10 throughout 10 epochs. The weight decay is set to be 0.0005 and the momentum is set to be 0.9. To train SSR on both LeNet and AlexNet, the mini-batch size is set to be 256. To train VGG-16, ResNet-50 and GoogLeNet, the mini-batch size is all set to be 32. 
After pruning, the pruned network is fine-tuned for 30 epochs, in which a learning rate starts at $10^{-4}$ and is scaled by 0.1 throughout 10 epochs. All experiments are run on NVIDIA GTX 1080Ti graphics card with 11GB and 128G RAM.
The number of pruned filters is directly controlled by the hyper-parameters $\lambda$, \emph{i.e.}, the regularization factor of the structured sparsity. In our experiments, we vary $\lambda$ in the set of $\{0.1, 0.2\cdots, 0.8\}$ with 8 values to select the best trade-off between the compression/speedup rate and the accuracy. 
For $r$ in the overrelaxation step, we set it to be 3.

\begin{table}[t]
\scriptsize
\begin{center}
\caption{Pruning results of LeNet on MNIST. ``Num-Num-Num'' is the number of remaining filters in each layer. K/M/B means thousand/million/billion in this paper, respectively. The testing mini-batch size is set to be 100.}
\label{tab1}
\vspace{-1.5em}
\begin{tabular}{c|ccc|ccc}
\Xhline{0.1em}
\multirow{2}*{Method} & \multirow{2}*{\#Filter/Node} & \multirow{2}*{FLOPs} & \multirow{2}*{\#Param.} & \multicolumn{1}{c}{CPU} & \multirow{2}*{Speedup} & \multicolumn{1}{c}{Top-1}\\
& & & & (ms) & & Err. $\uparrow$ \\
\Xhline{0.1em}
LeNet & 20-50-500 & 2.3M & 0.43M & 26.4 & $1\times$ & 0\% \\ \hline
\multirow{2}{*}{SSL \cite{wen2016learning}}  & 3-15-175 & 162K & 45K & 7.3 & $3.62\times$ & 0.05\% \\
& 2-11-134 & 91K & 26K & 6.0 & $4.40\times$ & 0.20\% \\  \hline
\multirow{2}{*}{TE \cite{molchanov2017pruning}} & 2-12-127 & 95K & 27K & 5.7 & $4.63\times$ & 0.02\% \\
 & 2-7-99 & 65K & 13K & 5.5 & $4.80\times$ & 0.20\%\\ \hline
CGES \cite{yoon2017combined} & - & 332K & 156K & - & - & 0.01\% \\
CGES+ \cite{yoon2017combined} & - & - & 43K & - & - & 0.04\% \\ \hline
\multirow{2}{*}{GSS \cite{torfi2018attention}} & 3-11-109 & 119K & 21K & 6.7 & $3.94\times$ & 0.08\% \\
& 3-8-82 & 95K & 12K & 5.6 & $4.71\times$ & 0.20\% \\ \hline
\multirow{2}{*}{SSR-L1} & - & - & 16K & 23.6 & $1.12\times$ & 0\% \\
& - & - & 9K & 20.2 & $1.31\times$ & 0.12\% \\ \hline
\multirow{2}{*}{SSR-L2,1} & 3-11-108 & 118K & 21K & 6.6 & $4.00\times$ & 0.05\% \\
& 2-8-77 & 67K & 11K & 4.8 & $5.50\times$ & 0.18\% \\ \hline
\multirow{2}{*}{SSR-L2,0} & 2-11-146 & 93K & 28K & 5.6 & $4.71\times$ & 0.05\% \\
& 2-8-79 & 67K & 11K & 4.9 & $5.39\times$ & 0.20\% \\
\Xhline{0.1em}
\end{tabular}
\end{center}
\end{table}

\textbf{Evaluation Protocols. }
For evaluation protocols, we quantize the performance by using FLOPs, the number of parameters and the Top-1/5 classification error.
To make a fair comparison, the speedup rate is measured in a single-thread Intel Xeon E5-2620 CPU and NVIDIA GTX TITAN X GPU.

\textbf{Alternative and State-of-the-art Approaches. }We first compare our filter selection criterion with three alternative criteria, which are briefly summarized as follows:
\begin{itemize}
\item[1.] \textbf{Random.} Randomly prune filters of each layer.
\item[2.] \textbf{L1-norm (Filter norm)} \cite{li2017pruning}. Filters with smaller magnitude tend to be unimportant filters. Therefore, the $\ell_1$-norm of each filter $s_i = \|\mathbf{K}(i,:)\|_1$ is chosen as its importance score. We then sort these scores to prune filters correspondingly.
\item[3.] \textbf{APoZ (Average Percentage of Zeros)} \cite{hu2016network}. The sparsity of each channel in output after ReLU activation can be chosen as the importance score of the corresponding filter. Then the sparsity is calculated as $s_i =1-\frac{1}{|\mathcal{I}(:,:,i)|}\sum\sum\mathbb{I}\big(\mathcal{I}(:,:,i) == 0\big)$, where $|\mathcal{I}(:,:,i)|$ is the entry number of the $i$-th channel in the tensor $\mathcal{I}$. The smaller $s_i$ is, the less important the corresponding filter is.
\end{itemize}
We also compare the proposed SSR scheme to the state-of-the-art filter pruning methods, including SSL \cite{wen2016learning}, ThiNet \cite{luo2017ThiNet}, Taylor expansion (TE \cite{molchanov2017pruning}), CGES \cite{yoon2017combined} and GSS \cite{torfi2018attention}. Furthermore, we make a comparison with our alternative schemes with different regularizers, \emph{i.e.,} SSR with $\ell_{2,1}$-norm (termed SSR-L2,1), SSR with $\ell_{2,0}$-norm (termed SSR-L2,0) and SSR with $\ell_1$-norm (termed SSR-L1).


\begin{figure*}[t]
\centering
  \subfigure{
    \includegraphics[scale = 0.45]{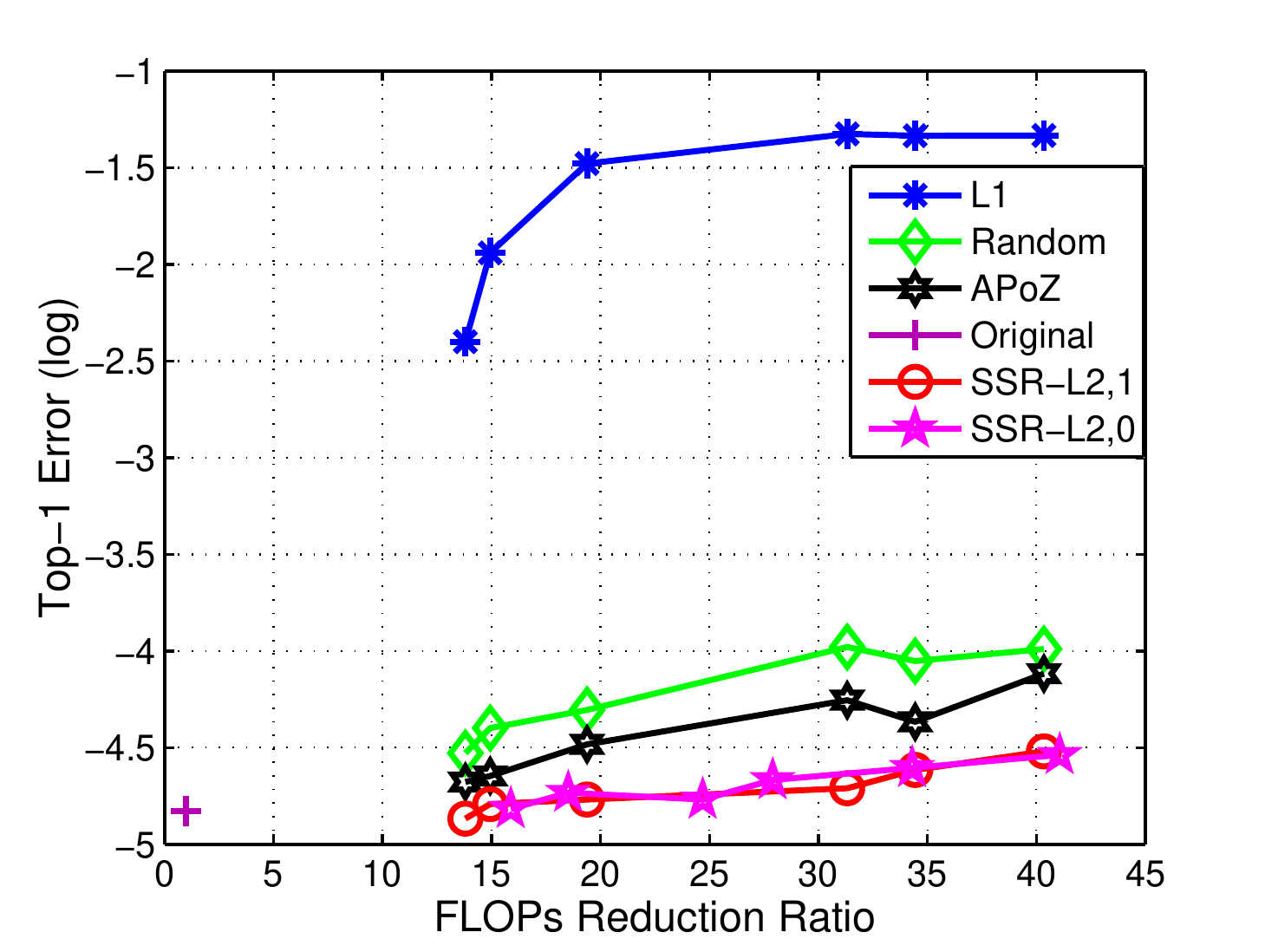}
    \label{fig:3_a}
    }
  \subfigure{
    \includegraphics[scale = 0.45]{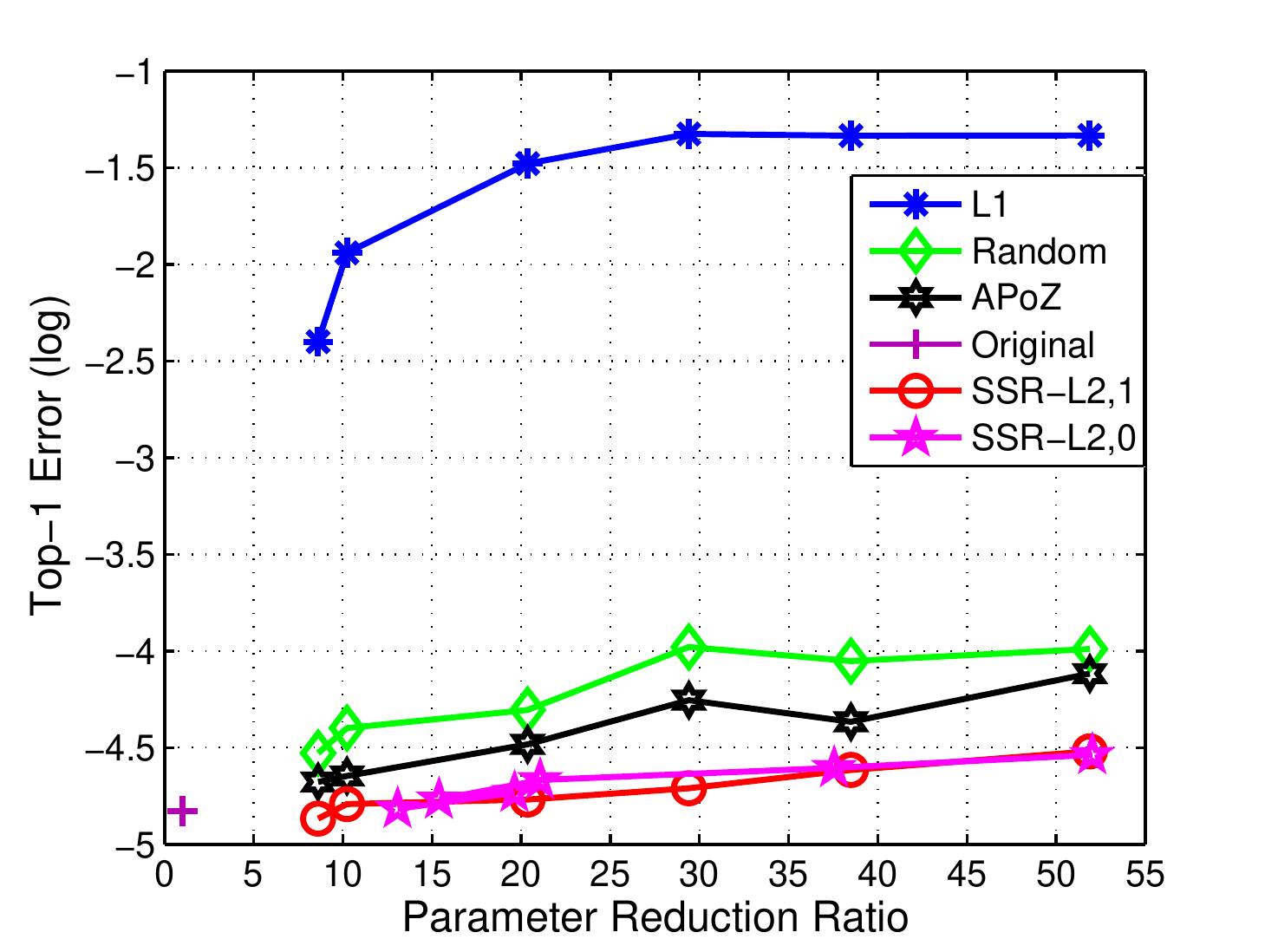}
    \label{fig:3_b}
  }
 \vspace{-1em}
 \caption{The results of different evaluation criteria on FLOPs and parameter numbers for compressing LeNet.} 
 \label{fig3}
\end{figure*}

\subsection{LeNet on MNIST}
MNIST is a small-scale dataset, which contains a training set of 60,000 images and a test set of 10,000 images from 10 classes. Each image is a $28 \times 28$ gray-scale handwritten digit image. LeNet on MNIST consists of 2 convolutional layers and 2 fully-connected layers, which achieves an error rate of 0.88\% on MNIST.
The detailed structure of LeNet is:
\begin{equation}
\label{eq16}
(20)C5 - MP2 - (50)C5 - MP2 - 500FC - 10FC - S,
\end{equation}
where $(20)C5$ is a $5\times 5$ convolutional layer with 20 filters, $MP2$ is a max-pooling layer with kernel size 2, $10FC$ is a fully-connected layer with 10 nodes, $S$ is a softmax loss layer.
Since the node number of the last layer is directly related to the number of classes, we prune the remaining layers except for the last layer\footnote{For AlexNet, VGG-16 and ResNet-50, we also keep the number nodes of the final layer unchanged and prune other remaining layers.}. The regularization factor $\lambda$ is fixed on 6 groups, \emph{i.e.}, (0.1,0.1,0.3), (0.3,0.3,0.4), (0.3,0.5,0.5), (0.4,0.5,0.5), (0.5,0.4,0.5), and (0.5,0.5,0.5). We compare our method to different criteria of filter selection \cite{li2017pruning,hu2016network} and also to other state-of-the-art methods in filter pruning \cite{wen2016learning,molchanov2017pruning,torfi2018attention,yoon2017combined}.

Fig.\,\ref{fig3} shows the pruning results of LeNet based on different filter selection criteria. Compared to these criteria and baselines, both FLOPs and parameter sizes are significantly reduced with lower error by using the proposed structured sparsity regularization, as expected. 
In contrast, the $\ell_1$-norm based filter selection performs poorly. 
To explain, due to the nonlinear transformation in the network, filters with small $\ell_1$-norm are still likely to be important, which have large impact on the final loss function. 
When a large proportion of filters with small $\ell_1$-norm are pruned, the classification error can be significantly increased.
Note that the simplest scheme of random filter selection works reasonably well, which is due to the self-recovery ability of the distributed representations \cite{bengio2013representation}. 
However, the random criterion is not robust in practice and may lead to large accuracy loss when being applied to compress large network (\emph{e.g.}, VGG-16), as presented in Table\,\ref{tab4} and Table\,\ref{tab5}.
For APoZ, the sparsity of feature maps is quite reasonable to prune the redundant filters, which is due to the self-sparsity of the pre-trained model with ReLU activation. 
In contrast, compared to these filter pruning methods, our SSR-L2,1 achieves the best performance with an increase of 0.29\% classification error, $40.37\times$ FLOPs reduction, and $51.89\times$ parameter reduction. 
SSR-L2,0 achieves relatively consistent results with SSR-L2,1. 
In particular, with a significant high compression ratio (\emph{i.e.,} $52.03\times$ parameter reduction ratio), SSR-L2,0 achieves much better performance than SSR-L2,1, as the $\ell_{2,0}$-norm is used to directly measure the cardinality of the filter structure. 
%

The quantitative performance for compressing LeNet using the proposed scheme is further shown in Table\,\ref{tab1}. 
First, we found that the FLOPs do not directly reflect the actual speedup ratio in online inference. 
For instance, compared to LeNet, the proposed SSR-L2,1 can reach $19.5\times$ FLOPs reduction, with a $4\times$ actual speedup ratio. 
To explain, memory accesses for both inter-layer and intra-layer can significantly increase the computation consumption. 
Second, TE \cite{molchanov2017pruning} achieves the best trade-off between the speedup ratio and the classification error among all the baselines (\emph{e.g.,} SSL \cite{wen2016learning}, CGES \cite{yoon2017combined} and GSS \cite{torfi2018attention}), as it inherits the effectiveness of filter selection by estimating the loss increase of pruning each filter with Taylor expansion. Note that CGES+ \cite{yoon2017combined} is the combination of iterative pruning \cite{han2015deep} and CGES for further compressing LeNet, which achieves 0.04\% increase of classification error using 10\% parameters of the full network. 
Compared to CGES+, the proposed SSR-L1 with $\ell_1$-norm achieves a significant higher compression ratio by $47.78\times$ (\emph{i.e.,} 9K number of parameters), only with an increase of 0.12\% Top-1 error. However, there are no structural constraints on filters/weights, which leads to very low speedup under the same hardware/software evaluation environment\footnote{To make a fair comparison, we evaluate the actual speedup of SSR-L1 without special hardware/software accelerators in experiments.}. 
Third, by using the proposed AULM with structured filter sparsity to adaptively select and prune the redundant filters, SSR-L2,1 achieves the best trade-off between the speedup/compression ratio and the classification error. For example, the Top-1 error is only increased by 0.18\% with $5.5\times$ speedup and $39.09\times$ compression.

\begin{table}[t]
\scriptsize
\begin{center}
\caption{The evaluations of the number of parameters and FLOPs both in convolutional and fully-connected layers, computational time on CPU (ms), GPU (ms), and classification error rates (Top-1/5 Err.) of AlexNet, VGG-16, ResNet-50 and GoogLeNet with mini-batch size 24.}
\vspace{-0.5em}
\label{tab2}
\begin{tabular}{|c|c|c|c|c|c|c|}
\hline
\multirow{2}*{Model} & \multirow{2}*{\#Param.} & \multirow{2}*{FLOPs} & \multicolumn{1}{c|}{CPU} & \multicolumn{1}{c|}{GPU} & \multicolumn{1}{c|}{Top-1} & \multicolumn{1}{c|}{Top-5} \\
& & & (ms) & (ms) & Err. & Err. \\ \hline\hline
AlexNet & 62.4M & 729.7M & 2,194 & 30 & 43.40\% & 19.88\% \\\hline
VGG-16 & 138.4M & 15.5B & 8,287 & 266 & 31.66\% & 11.55\% \\\hline
ResNet-50 & 25.5M & 3.9B & 7,790 & 255 & 24.88\% & 7.70\% \\\hline
GoogLeNet & 7.0M & 1.6B & 5,480 & 136 & 31.90\% & 11.45\% \\
\hline
\end{tabular}
\end{center}
\end{table}

\subsection{ImageNet}
ImageNet 2012 contains over 1 million training images from 1,000 object classes, as well as a validation set of 50,000 images. 
Each image is rescaled to a size of $256\times 256$. A $224\times 224$ image is randomly cropped from each scaled image (except for AlexNet with a $227\times 227$ cropping size) and mirrored for data augmentation. 
We test the pruned network on the validation set using single-view testing (central patch only) to evaluate the classification accuracy.

We implement the proposed SSR scheme on four CNNs, \emph{i.e.}, AlexNet, VGG-16, ResNet-50 and GoogLeNet. 
AlexNet contains 5 convolutional layers and 3 fully-connected layers, VGG-16 contains 13 convolutional layers and 3 fully-connected layers, ResNet-50 contains 54 convolutional layers with 16 residual blocks, and GoogLeNet contains 21 convolutional layers with 9 inception blocks. 
Unlike AlexNet and VGG-16, ResNet-50 and GoogLeNet use the global average pooling over the last convolutional layer to reduce the number of parameters, which removes 3 fully-connected layers. 
The computation time and storage overhead of four networks, together with their classification error, are shown in Table\,\ref{tab2}. 

\begin{figure*}[!t]
\centering
  \subfigure{
    \includegraphics[scale = 0.32]{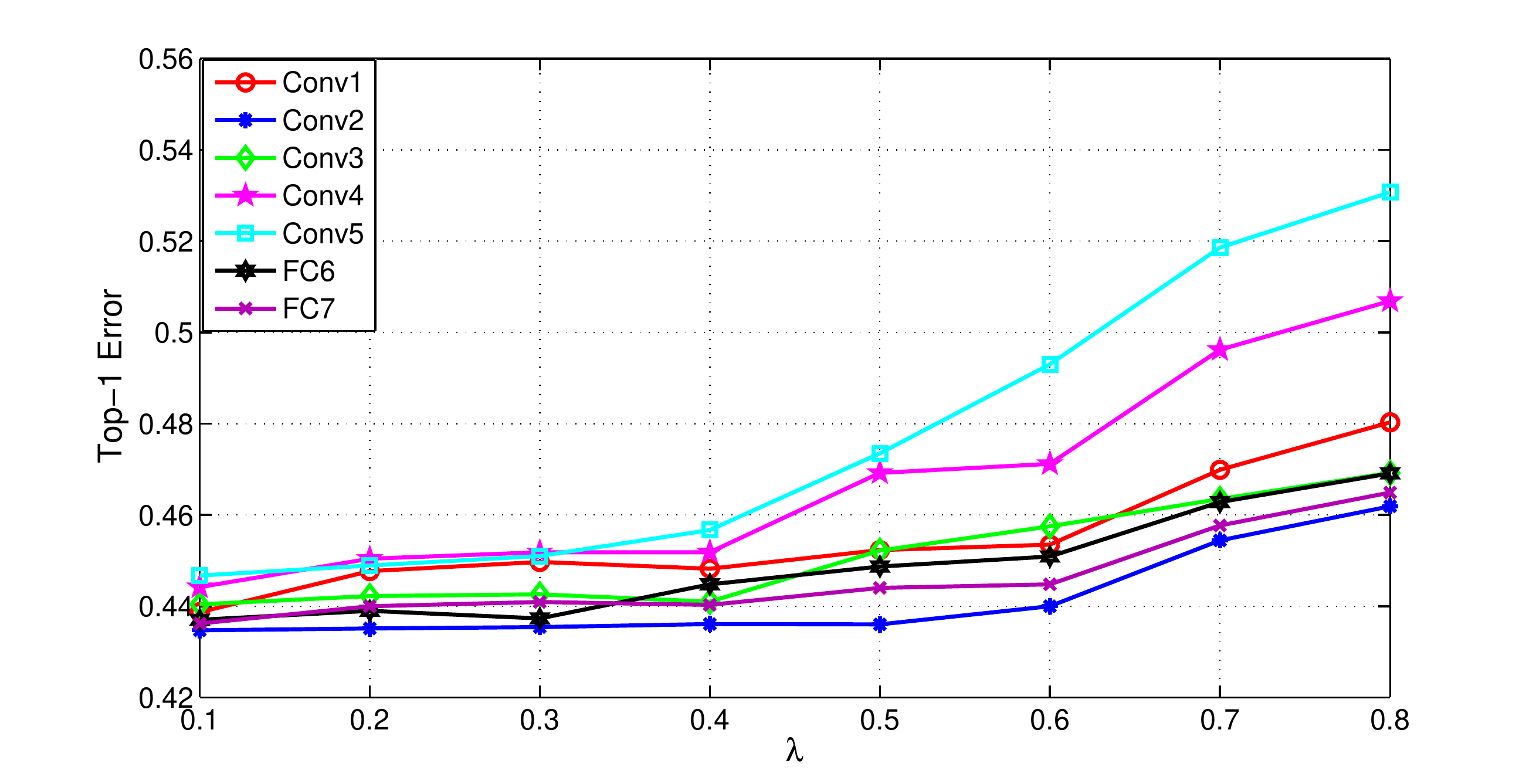}
    \label{fig:4_a}
    } 
  \subfigure{
    \includegraphics[scale = 0.32]{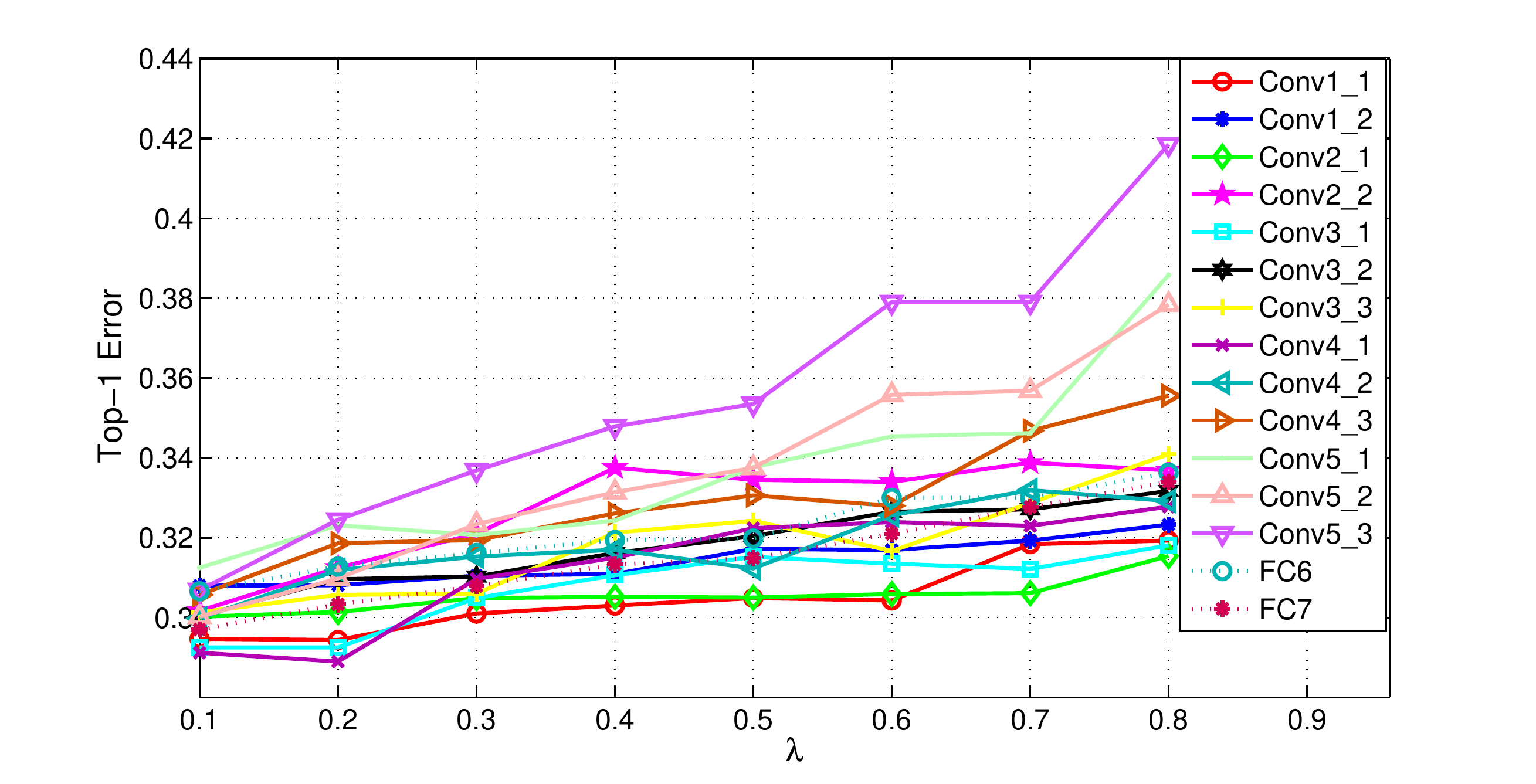}
    \label{fig:4_b}
  }
 \vspace{-1em}
 \caption{Sensitivity of pruning filter in each layer. Left: the sensitivity of AlexNet, Right: the sensitivity of VGG-16.} 
 \label{fig4}
\end{figure*}

\textbf{Sensitivity Analysis. }We explore the sensitivity of each layer in the network to guide the filter pruning for this layer. 
We take AlexNet and VGG-16 for instance, most layers are robust to prune, as
shown in Fig.\,\ref{fig4}. 
But still, there exist a small amount of sensitive layers, which locate at the top convolutional layers.
For example, it is sensitive to prune the $4$-th and the $5$-th convolutional layers for AlexNet and the last 3 convolutional layers (\emph{i.e.}, $Conv5\_1, Conv5\_2, Conv5\_3$) for VGG-16. To explain, these top layers often have high-level semantic information that is necessary for maintaining the classification accuracy. 
In addition, pruning some filters in the specific layers (\emph{e.g.}, $Conv1\_1, Conv3\_1, Conv4\_1$ in VGG-16 when $\lambda$ is set to be 0.2) gets a slightly better accuracy compared to the original network, which reveals that the redundant filters can reduce the discriminability of the original network.
Therefore, we use a different $\lambda$ for each layer to reduce the impact of these sensitive layers (\emph{i.e.}, we set a large $\lambda$ in insensitive layers, and set a small one in sensitive layers).   

\begin{table}[t]
\small
\begin{center}
\caption{Pruning results of AlexNet. The test mini-batch size is 24.}
\label{tab4}
\begin{tabular}{|c|c|c|c|c|c|}
\hline
\multirow{2}*{Method}   & \multirow{2}*{\#Param.} & \multicolumn{2}{c|}{Speedup} & \multicolumn{1}{c|}{Top-1} & \multicolumn{1}{c|}{Top-5} \\
\cline{3-4}
& & CPU & GPU & Err. $\uparrow$ & Err. $\uparrow$\\ \hline\hline
TE \cite{molchanov2017pruning} & 60.2M & $1.8\times$ & $1.4\times$ & 1.87\% & 1.59\% \\ \hline
\multirow{2}{*}{SSL \cite{wen2016learning}} & 60.9M & $1.5\times$ & $1.2\times$ & 1.32\% & 1.24\% \\ \cline{2-6}
 & 58.2M & $1.5\times$ & $1.3\times$ & 3.26\% & 2.48\%\\ \hline
\multirow{2}{*}{Random} & 60.8M & $1.6\times$ & $1.3\times$ & 2.25\% & 1.92\%\\ \cline{2-6}
 & 48.0M & $2.0\times$ & $1.5\times$ & 6.39\% & 4.66\%\\ \hline
\multirow{2}{*}{L1 \cite{li2017pruning}}  & 60.8M & $1.6\times$ & $1.3\times$ & 1.17\% & 0.86\%\\ \cline{2-6}
 & 48.0M & $2.0\times$ & $1.5\times$ & 3.36\% & 2.69\%\\ \hline
\multirow{2}{*}{APoZ \cite{hu2016network}}  & 60.8M & $1.6\times$ & $1.3\times$ & 2.17\% & 1.67\%\\ \cline{2-6}
 & 48.0M & $2.0\times$ & $1.5\times$ & 4.31\% & 2.75\%\\ \hline 
\multirow{2}{*}{SSR-L1} & 60.7M & $1.0\times$ & $1.0\times$ & 0.79\% & 0.68\%\\ \cline{2-6}
 & 22.6M & $1.2\times$ & $1.1\times$ & 1.58\% & 1.38\%\\ \hline
\multirow{2}{*}{SSR-L2,1} & 60.8M & $1.6\times$ & $1.3\times$ & 0.19\% & 0.20\%\\ \cline{2-6}
 & 48.0M & $2.0\times$ & $1.5\times$ & 1.72\% & 1.38\%\\ \hline 
\multirow{2}{*}{SSR-L2,0} & 60.8M & $1.6\times$ & $1.3\times$ & -0.05\% & 0.14\%\\ \cline{2-6}
 & 45.9M & $2.1\times$ & $1.7\times$ & 1.57\% & 1.28\%\\ \hline 
SSR-L2,1-GAP & 2.5M & $1.8\times$ & $1.4\times$ & 5.25\% & 4.62\%\\ \hline
SSR-L2,0-GAP & 2.5M & $1.8\times$ & $1.5\times$ & 5.12\% & 4.63\%\\ \hline
\end{tabular}
\end{center}
\end{table}

\textbf{Quantitative Results. }
Since the fully-connected layers occupy over 90\% storage in AlexNet and VGG-16, we replace the original fully-connected layers with global average pooling (GAP) \cite{lin2014network} to further compress the whole network. 
``X-GAP'' refers to the model using the GAP after all convolutional layers are pruned via ``X'' methods (\emph{e.g.}, ThiNet, SSR). The ``X-GAP'' is fine-tuned with the same fine-tuning parameters as described in Sec.\,\ref{ssec_4_1}.  

As shown in Table\,\ref{tab4}, we prune AlexNet with three groups of $\lambda$, \emph{i.e.}, (0.2, 0.4, 0.5, 0.6, 0.1, 0.1, 0.3), (0.4, 0.5, 0.7, 0.6, 0.1, 0.3, 0.3) and (0.2, 0.4, 0.5, 0.6, 0.1, GAP). 
Compared to other filter pruning methods \cite{li2017pruning,wen2016learning,molchanov2017pruning,hu2016network}, our SSR scheme achieves the best trade-off between the speedup/compression rate and the Top-1/5 classification error. 
First, we compare SSR-L2,1 to three alternative selection criteria (\emph{i.e.}, random, L1-norm \cite{li2017pruning} and APoZ \cite{hu2016network}) with the same pruning number in each layer\footnote{To make a fair comparison, the number of filter pruning in each layer based on three alternative selection criteria is the same to SSR-L2,1.}. SSR-L2,1 achieves the lowest Top-1/5 classification error. 
To explain, all selection criteria are naive methods to prune the filters based on the statistical property, resulting in a large approximation error of each layer that is propagated throughout the network.
Second, by directly employing SGD with filter-wise sparsity to solve the SSL problem, the redundant filters cannot be pruned efficiently, which only achieves $1.5\times$ CPU speedup with an increase of 1.32\% Top-1 error\footnote{It is different to the result reported in Wen \emph{et al.} \cite{wen2016learning}, due to the different fine-tuning framework and deep learning library.}.
Third, the work in \cite{molchanov2017pruning} uses Taylor expansion (TE) to approximate the loss increase, which is similar to ours but is with a totally different selection criterion. Quantitatively, it is time-consuming for pruning one filter and then fine-tuning the network iteratively. 
In contrast, SSR-L2,1 achieves the lowest Top-1 error increase of 1.72\% and Top-5 error increase of 1.38\%, while reducing much larger amount of parameters.
Fourth, we also compare two different kinds of structured sparsity regularizations (\emph{i.e.}, $\ell_{2,1}$-regularization and $\ell_{2,0}$-regularization) with element-wise sparsity regularization (\emph{i.e.}, $\ell_1$-regularization), and observe that SSR-L1 significantly reduces the memory storage with only 22.6M parameters, which is twice less than SSR-L2,1 and SSR-L2,0 with a comparable error increase. However, SSR-L1 does not boost the inference efficiency, as element-wise sparsity cannot significantly reduce the number of filters that the computation is on par with the full network (see later in Sec. \ref{ana} for more detailed discussions). 
As for structured sparsity regularization, compared to SSR-L2,1, SSR-L2,0 achieves a lower error increase (\emph{i.e.}, 1.57\% \emph{vs.} 1.72\% in Top-1 error increase) with a much smaller amount of parameters and higher speedup, \emph{i.e.}, 45.9M parameters and $2.1\times$ CPU speedup \emph{vs.} 48M parameters and $2.0\times$ CPU speedup.
Moreover, to further compress AlexNet using SSR, GAP enables the prune network to be more compact, leading to a $24.96\times$ compression rate (\emph{i.e.}, 2.5M parameters).

\begin{table}[t]
\scriptsize
\begin{center}
\caption{Pruning results of VGG-16. The test mini-batch size is 24.}
\vspace{-1.5em}
\label{tab5}
\begin{tabular}{|c|c|c|c|c|c|c|}
\hline
\multirow{2}*{Method}  & \multirow{2}*{FLOPs} & \multirow{2}*{\#Param.} & \multicolumn{2}{c|}{Speedup} & \multicolumn{1}{c|}{Top-1} & \multicolumn{1}{c|}{Top-5} \\
\cline{4-5}
& & & CPU & GPU & Err. $\uparrow$ & Err. $\uparrow$\\ \hline\hline
Random & 4.5B & 126.7M & $2.0\times$ & $2.4\times$ & 0.98\% & 0.43\%\\ \hline
L1 \cite{li2017pruning} & 4.5B & 126.7M & $2.0\times$ & $2.4\times$ & 0.29\% & -0.05\%\\ \hline
APoZ \cite{hu2016network} & 4.5B & 126.7M & $2.0\times$ & $2.4\times$ & -0.64\% & -0.43\%\\ \hline
TE \cite{molchanov2017pruning} & 4.2B & 135.7M & - & $2.7\times$ & - & 3.94\%\\ \hline
ThiNet \cite{luo2017ThiNet} & 5.0B & 131.5M & $1.9\times$ & $2.2\times$ & -1.46\% & -1.09\%\\ \hline
SSR-L2,1 & 4.5B & 126.7M & $2.0\times$ & $2.4\times$ & -1.46\% & -1.08\%\\ \hline
SSR-L2,0 & 4.5B & 126.2M & $2.0\times$ & $2.4\times$ & -1.65\% & -0.97\%\\ \hline\hline
Random-GAP & 4.4B & 9.2M & $2.1\times$ & $2.5\times$ & 5.47\% & 4.39\%\\ \hline
L1-GAP \cite{li2017pruning} & 4.4B & 9.2M & $2.1\times$ & $2.5\times$ & 4.62\% & 3.10\%\\ \hline
APoZ-GAP \cite{hu2016network} & 4.4B & 9.2M & $2.1\times$ & $2.5\times$ & 3.72\% & 2.65\%\\ \hline
ThiNet-GAP \cite{luo2017ThiNet} & 4.9B & 9.5M & $2.0\times$ & $2.3\times$ & 1.00\% & 0.52\%\\ \hline 
SSR-L2,1-GAP & 4.4B & 9.2M & 2.1$\times$ & $2.5\times$ & 0.52\% & 0.27\%\\ \hline
SSR-L2,0-GAP & 4.3B & 9.0M & 2.1$\times$ & $2.6\times$ & 0.83\% & 0.62\%\\ \hline
\end{tabular}
\end{center}
\end{table}

For VGG-16, we summarize the performance comparison to \cite{luo2017ThiNet,li2017pruning,molchanov2017pruning,hu2016network} in Table\,\ref{tab5}.
In experiments, $\lambda$ in the first 10 convolutional layers is set to be (0.5, 0.4, 0.5, 0.3, 0.5, 0.3, 0.3, 0.4, 0.5, 0.3) with large values, while the last 3 convolutional layers are all set to be 0.1. 
$\lambda$ is set to be (0.1, 0.6) in the fully-connected layers. 
First, instead of directly pruning filters, ThiNet \cite{luo2017ThiNet} conducts a greedy local channel selection, while TE \cite{molchanov2017pruning} uses a greedy feature map selection to prune the feature maps. 
Compared to ThiNet, TE achieves a higher GPU speedup (\emph{i.e.}, $2.7\times$ \emph{vs.} $2.2\times$ in ThiNet), but has a significant increase in Top-5 error, which affects the discriminative ability of the compressed model.
For three alternative criteria of filter selection, APoZ \cite{hu2016network} achieves the lowest increase in both Top-1 and Top-5 classification error by the same factor of GPU and CPU speedup, \emph{e.g.}, $2.0\times$ CPU speedup and $2.4\times$ GPU speedup with a decrease of 0.64\% Top-1 classification error and 0.43\% Top-5 classification error, respectively.
Compared to all baselines with fully-connected layers, SSR-L2,1 achieves the best trade-off between classification error and speedup, \emph{e.g.}, an decrease of 1.46\% Top-1 error by a factor of $2.4\times$ GPU speedup. 
To explain, the relationship between the final output and the local filters is directly considered in SSR, which therefore can adaptively prune redundant filters that have less impact on the global outputs. 
Second, by replacing with GAP, the network is further compressed by a large rate, \emph{e.g.}, ThiNet-GAP achieves $14.6\times$ parameter reduction, \emph{i.e.}, 9.5M parameters \emph{vs.} 138.4M parameters in the full VGG-16. 
Compared to the above three selection criteria and ThiNet-GAP, our SSR-L2,1-GAP still achieves the best performance, \emph{i.e.}, $2.5\times$ GPU speedup and $15\times$ parameter reduction, only with an increase of 0.52\% Top-1 error. 
In addition, compared to SSR-L2,1-GAP, SSR-L2,0-GAP achieves a comparable result by a factor of $2.6\times$ GPU speedup and $15.4\times$ parameter reduction, with an increase of 0.83\% Top-1 error.

\begin{figure*}[!t]
\centering
  \subfigure[Epoch / \# of Parameters]{
    \includegraphics[scale = 0.4]{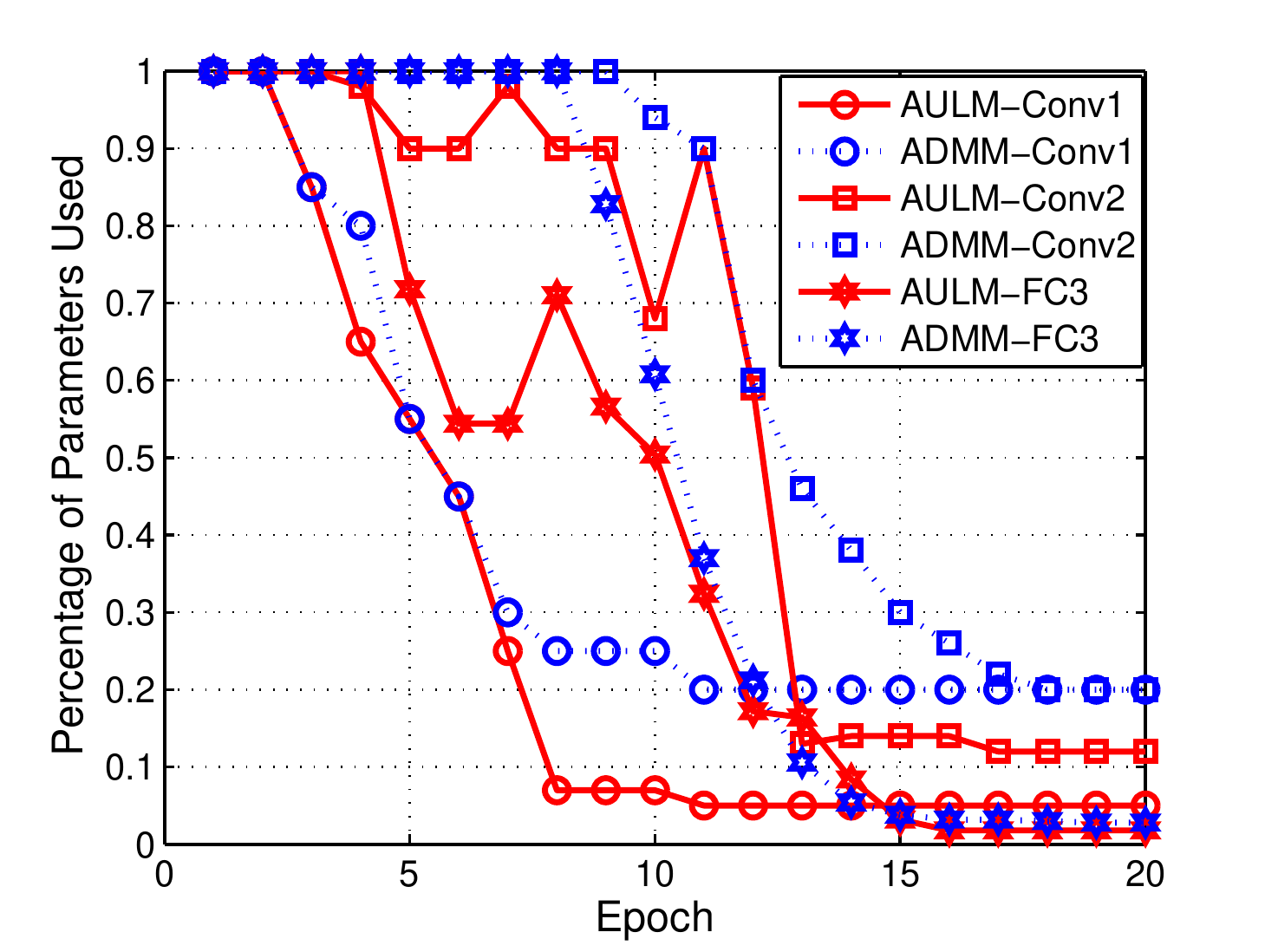}
    \label{fig:4_a}
    }	
  \subfigure[Epoch / Top-1 Error]{
    \includegraphics[scale = 0.4]{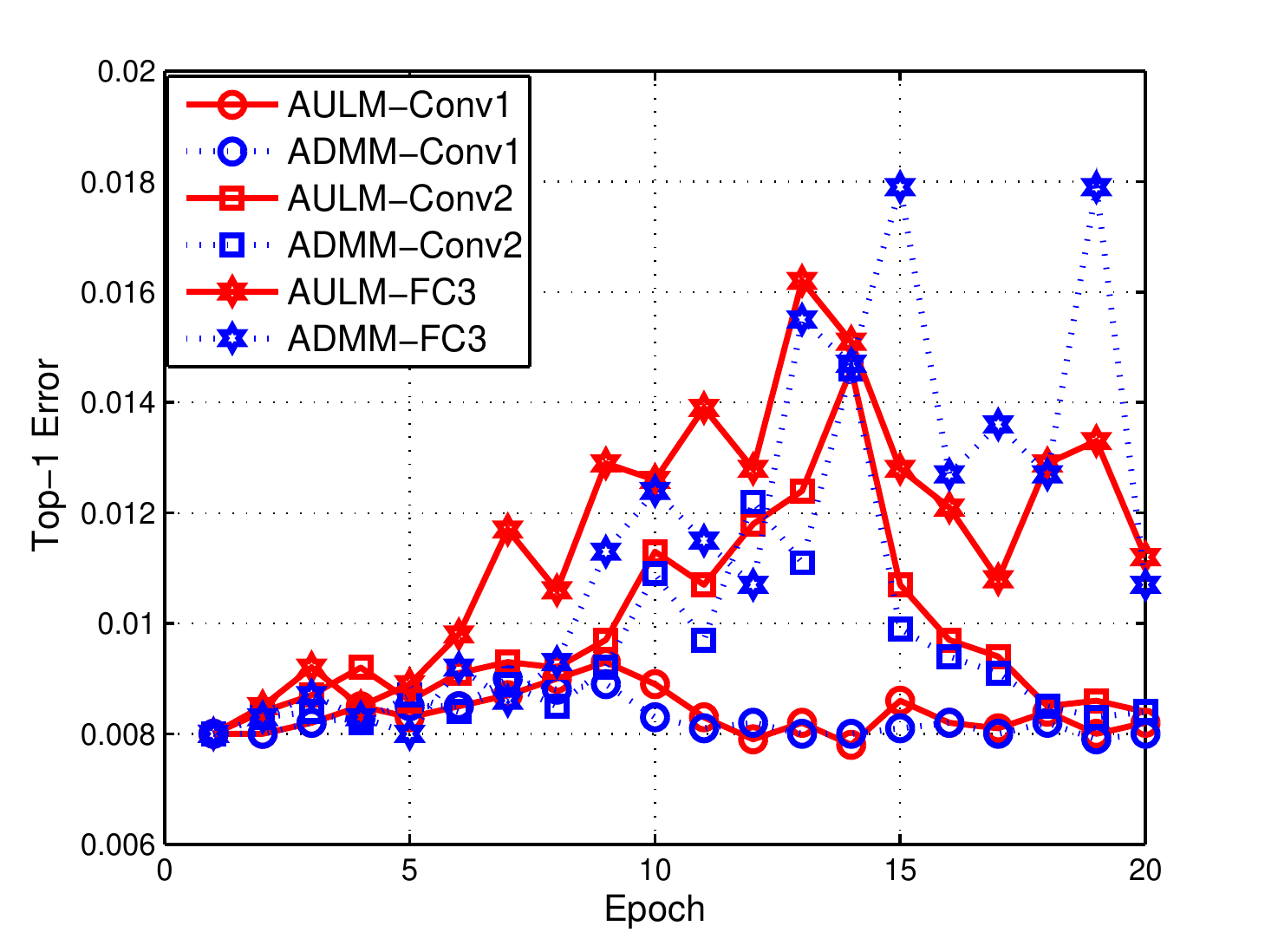}
    \label{fig:4_b}
    }
 \vspace{-0.5em}
 \caption{Further analysis of the $\ell_{2,1}$-regularization by our AULM and ADMM on MNIST dataset.} 
 \label{fig_aulm_admm}
\end{figure*}

\begin{table}[t]
\footnotesize
\begin{center}
\caption{Pruning results of ResNet-50. The test mini-batch size is 24.}
\vspace{-0.5em}
\label{tab6}
\begin{tabular}{|c|c|c|c|c|c|c|}
\hline
\multirow{2}*{Method}  & \multirow{2}*{FLOPs} & \multirow{2}*{\#Param.} & \multicolumn{2}{c|}{Speedup} & \multicolumn{1}{c|}{Top-1} & \multicolumn{1}{c|}{Top-5} \\
\cline{4-5}
& & & CPU & GPU & Err. $\uparrow$ & Err. $\uparrow$\\ \hline\hline
\multirow{2}{*}{ThiNet \cite{luo2017ThiNet}} & 2.4B & 16.9M & $1.7\times$ & $1.2\times$ & 3.09\% & 1.63\%\\ \cline{2-7} 
& 1.7B & 12.3M & $1.9\times$ & $1.4\times$ & 4.12\% & 2.28\%\\
\hline
SSL \cite{wen2016learning} & 2.1B & 13.2M & $1.7\times$ & $1.3\times$ & 4.58\% & 2.68\%\\ \hline
\multirow{2}{*}{Random} & 1.9B & 15.9M & $1.8\times$ & $1.3\times$ & 3.75\% & 2.49\%\\ \cline{2-7}
& 1.7B & 12.2M & $1.9\times$ & $1.4\times$ & 4.65\% & 2.75\%\\
\hline
\multirow{2}{*}{L1 \cite{li2017pruning}} & 1.9B & 15.9M & $1.8\times$ & $1.3\times$ & 3.36\% & 2.08\%\\ \cline{2-7}
& 1.7B & 12.2M & $1.9\times$ & $1.4\times$ & 4.31\% & 2.42\%\\
\hline
\multirow{2}{*}{APoZ \cite{hu2016network}} & 1.9B & 15.9M & $1.8\times$ & $1.3\times$ & 3.47\% & 2.39\%\\ \cline{2-7}
& 1.7B & 12.2M & $1.9\times$ & $1.4\times$ & 4.25\% & 2.41\%\\
\hline
\multirow{2}{*}{SSR-L2,1} & 1.9B & 15.9M & $1.8\times$ & $1.3\times$ & 2.99\% & 1.73\%\\ \cline{2-7}
& 1.7B & 12.2M & $1.9\times$ & $1.4\times$ & 3.97\% & 2.01\%\\
\hline 
\multirow{2}{*}{SSR-L2,0} & 1.9B & 15.5M & $1.9\times$ & $1.4\times$ & 2.83\% & 1.57\%\\ \cline{2-7}
& 1.7B & 12.0M & $1.9\times$ & $1.4\times$ & 3.65\% & 2.11\%\\
\hline
\end{tabular}
\end{center}
\end{table}

We also perform the proposed SSR on multi-branch networks, \emph{e.g.}, ResNet-50 and GoogLeNet. The results of SSR on ResNet-50 are shown in Table\,\ref{tab6}. We prune ResNet-50 with two groups of $\lambda$.
In the first 7 residual blocks of the first group, the hyper-parameter $\lambda$ of each residual block is set to be (0.4, 0.3), while $\lambda$ of each residual block is set to be (0.3, 0.4) in the remaining residual blocks (\emph{i.e.}, 9 residual blocks). At the second group, the corresponding $\lambda$ of each residual block is increased by 0.1 on the basis of the first group. 
Note that, we skip the first convolutional layer, which is pretty sensitive for pruning. 
In addition, we prune the first two layers in each residual block and leave the output and projection shortcuts of residual block unchanged, as shown in Fig.\,\ref{fig2}. 
We found that SGD in SSL \cite{wen2016learning} is not very effective to solve Eq.\,(\ref{eq2}) with $\ell_{2,1}$-regularization, which leads to a significant error increase with a limited FLOPs reduction. 
For three alternative criteria of filter selection, APoZ \cite{hu2016network} still achieves the lowest error increase at the same computation complexity and memory storage. 
Although ThiNet \cite{luo2017ThiNet} achieves the best performance among these state-of-the-art baselines \cite{li2017pruning,wen2016learning,hu2016network}, it requires additional samples (\emph{i.e.}, new input/output pairs from hidden layers) in each layer to find the optimal channels for pruning, which is not only expensive to store additional training datasets, and is also time consuming to collect them in offline training.
Moreover, ThiNet only reduces the reconstruction error of each layer, which however ignores the correlation between local filter pruning and global output, leading to the accumulation of reconstruction error.
Compared to ThiNet, without supervised information of hidden layers, SSR-L2,0 employs the original ImageNet dataset to improve the classification accuracy at the same speedup ratio, and also achieves a higher parameter reduction ($2.13\times$ \emph{vs.} $2.07\times$). 
For GoogLeNet, we prune all convolutional filters with high computation complexity,  \emph{i.e.,} filter size of $3\times3$ and $5\times5$. For $\lambda$, it is set to be 0.5 and 0.3 in the first three inception blocks and the remaining inception blocks, respectively. We skip the first convolutional layer and kernels with a size of $1\times1$ for effective pruning. As shown in Table\,\ref{tab7}, compared to three alternative criteria of filter selection, SSR-L2,1 achieves a lower error increase at the same pruning-level. By replacing $\ell_{2,1}$-regularization with $\ell_{2,0}$-regularization, SSR-L2,0 achieves the best performance with an increase of 1.05\% Top-5 error by $1.7\times$ compression and $1.6\times$ CPU speedup.

\begin{figure*}[!t]
\centering
  \subfigure[\# Iterations / Training Loss]{
    \includegraphics[scale = 0.385]{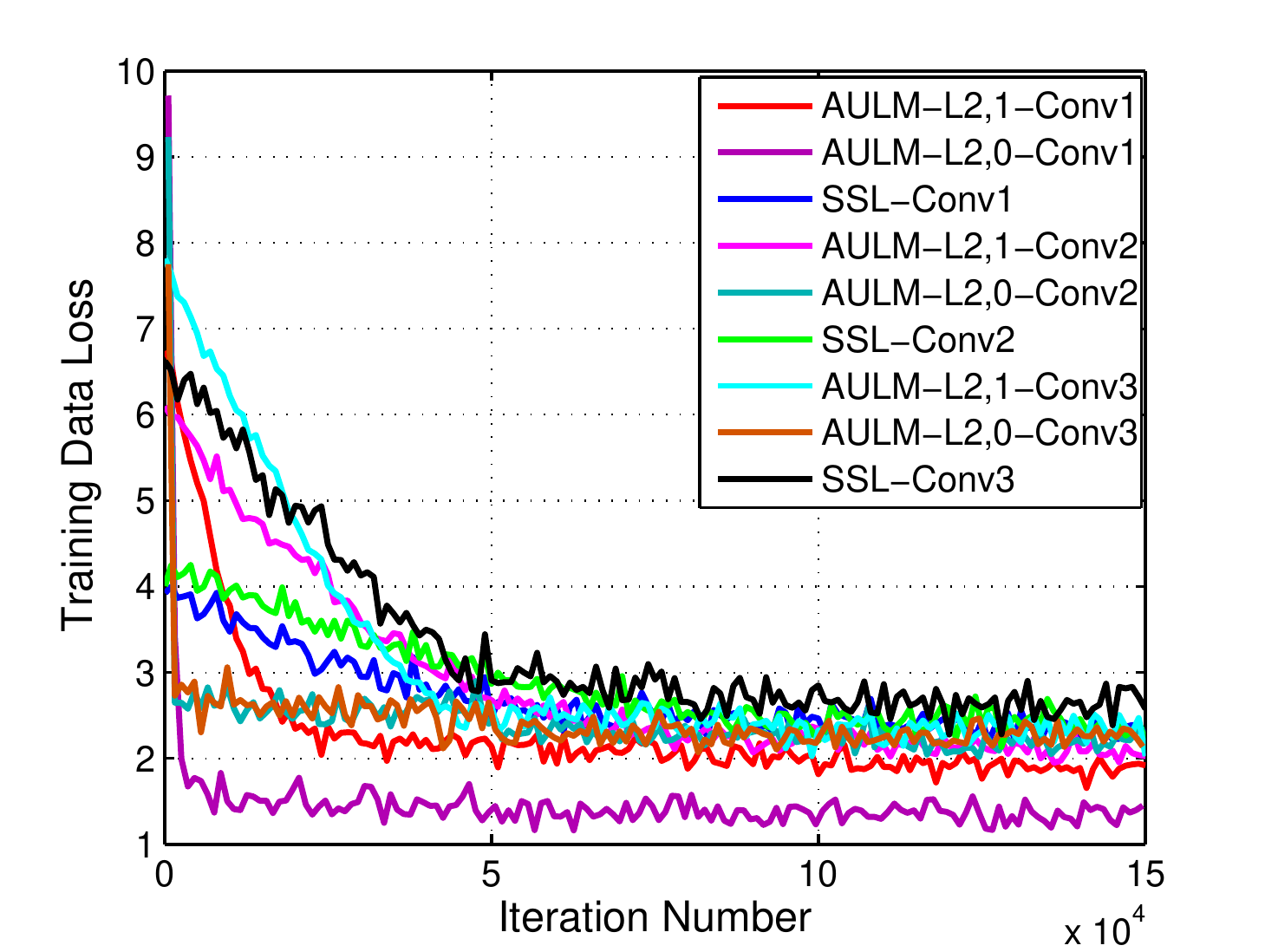}
    \label{fig:5_a}
    }	
  \subfigure[Epoch / \# of Pruned Filters]{
    \includegraphics[scale = 0.385]{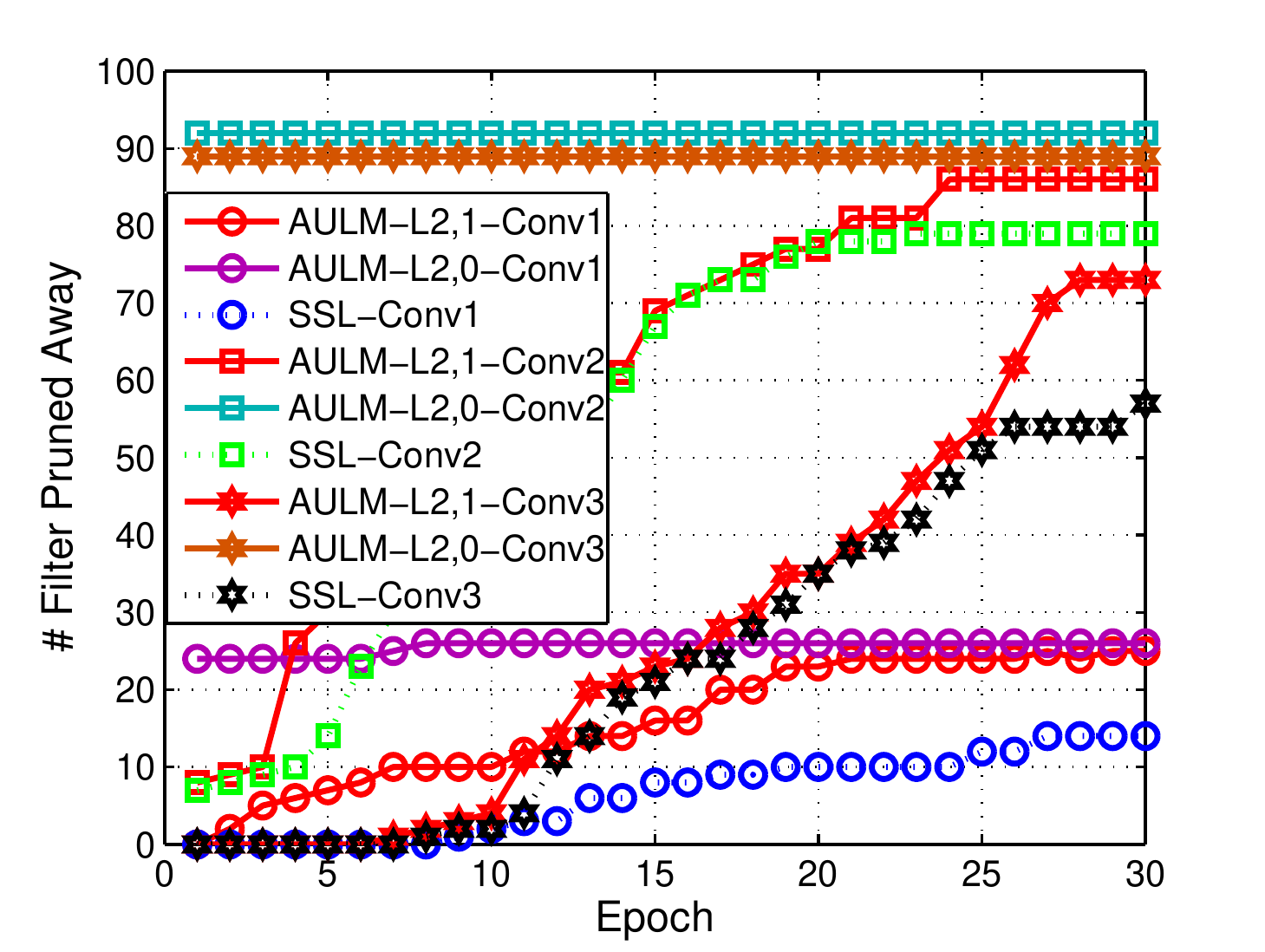}
    \label{fig:5_b}
    }
  \subfigure[Epoch / Top-5 Error]{
    \includegraphics[scale = 0.385]{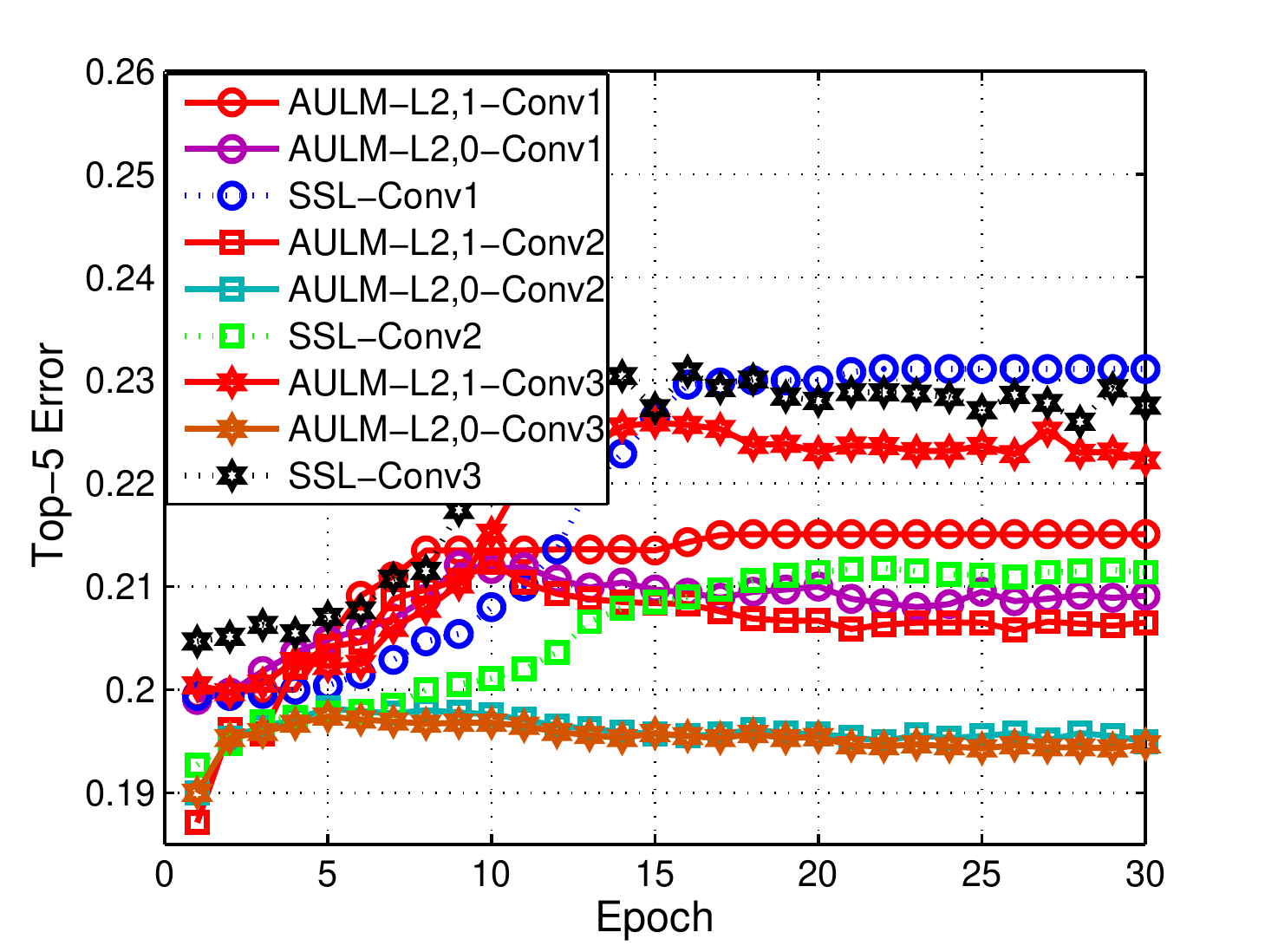}
    \label{fig:5_c}
  }
 \vspace{-1em}
 \caption{Convergence of our AULM slover and SSL in the first three convolutional layers of AlexNet.} 
 \label{fig5}
\end{figure*}

\begin{table}[t]
\footnotesize
\begin{center}
\caption{Pruning results of GoogLeNet. The test mini-batch size is 24.}
\vspace{-0.5em}
\label{tab7}
\begin{tabular}{|c|c|c|c|c|c|c|}
\hline
\multirow{2}*{Method}  & \multirow{2}*{FLOPs} & \multirow{2}*{\#Param.} & \multicolumn{2}{c|}{Speedup} & \multicolumn{1}{c|}{Top-1} & \multicolumn{1}{c|}{Top-5} \\
\cline{4-5}
& & & CPU & GPU & Err. $\uparrow$ & Err. $\uparrow$\\ \hline\hline
Random & 1.0B & 4.2M & $1.5\times$ & $1.4\times$ & 4.04\% & 2.54\%\\
\hline
L1 \cite{li2017pruning} & 1.0B & 4.2M & $1.5\times$ & $1.4\times$ & 3.00\% & 1.76\%\\
\hline
APoZ \cite{hu2016network} & 1.0B & 4.2M & $1.5\times$ & $1.4\times$ & 2.30\% & 1.29\%\\
\hline
SSR-L2,1 & 1.0B & 4.2M & $1.5\times$ & $1.4\times$ & 1.90\% & 1.26\%\\
\hline 
SSR-L2,0 & 1.0B & 4.1M & $1.6\times$ & $1.4\times$ & 1.81\% & 1.05\%\\
\hline
\end{tabular}
\end{center}
\end{table}


\subsection{Analysis}
\label{ana}
\textbf{Efficiency Analysis. }
We first analyze the empirical efficiency of AULM and ADMM. As shown in Fig.\,\ref{fig_aulm_admm}, we found that our AULM can help to learn a more compact network with almost the same error using much fewer epochs, compared to ADMM. For instance, AULM achieves 0.9\% error with only 8 epochs and 7\% parameters in the Conv1 layer, while ADMM achieves almost the same error but requires 11 epochs and 20\% parameters.
This faster convergence for structured filter sparsity is due to that we apply Nesterov's optimization to overrelax variables (\emph{i.e.}, structured sparse filters and dual variables) for accelerating the alternative optimization.
Second, we further study the influence of different optimization strategy (\emph{i.e.}, SSL \emph{VS.} AULM) on filter pruning. 
In SSL \cite{wen2016learning}, the structured sparsity of filters with $\ell_{2,1}$-norm is learned by directly solving Eq.\,(\ref{eq2}) with SGD, which is different from the proposed AULM-L2,1. 
Taking the first three layers of AlexNet for instance, they occupy a significant proportion of the computational overhead. 

Fig.\,\ref{fig5} presents the convergence process of different solvers over different layers in Conv1, Conv2, Conv3, respectively. 
Compared to SSL, as shown in Fig.\,\ref{fig:5_a}, AULM-L2,1 is faster to reduce the training loss and also achieves a lower training loss, which leads to more effective training for structured pruning.  
Moreover, by using AULM-L2,1 instead of SSL, the number of pruned filters is always larger, the corresponding classification error is lower, and the convergence is faster, especially in the first convolutional layer (the convergence after 8 epochs in AULM-L2,1 \emph{vs.} 20 epochs in SSL). 
Therefore, alternative optimization in AULM is more effective to prune the network than SGD in SSL.
Furthermore, we also make a comparison between two different structured regularizers (\emph{i.e.}, $\ell_{2,1}$-norm and $\ell_{2,0}$-norm) to explicitly analyze their convergence by AULM\footnote{We have not compared the SGD to our AULM on $\ell_{2,0}$-norm regularization, since SGD cannot solve the NP-hard problem.}. 
As shown in Fig.\,\ref{fig:5_b} and \ref{fig:5_c}, compared to AULM-L2,1, we observe that AULM-L2,0 is not only significantly faster to generate more structured filters, but also achieves a lower Top-5 error. Interestingly, the number of structured filters is almost constant during training, which is due to the closed-form solution by Eq.\,(\ref{eq12}) that leads to almost the same structured sparsity of intermediate filters after the first updating.

\begin{figure*}[!t]
\centering
  \subfigure[SSR-Ll, Error: 21.26\%, Sparsity: 58.67\%]{
    \includegraphics[scale = 1]{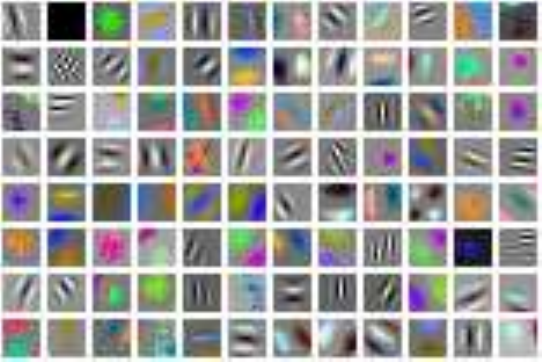}
    \label{fig:6_a}
    }	
  \subfigure[SSR-L2,1, Error: 21.26\%, Sparsity: 13.54\%]{
    \includegraphics[scale = 1]{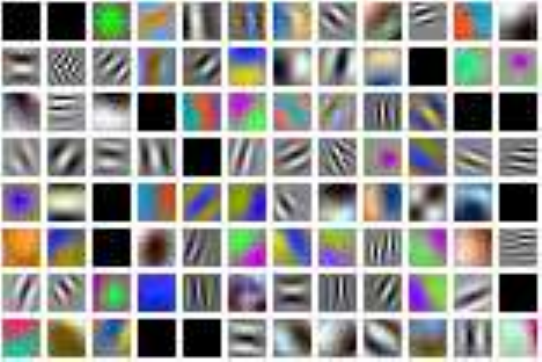}
    \label{fig:6_b}
    }
  \subfigure[SSR-L2,0, Error: 21.16\%, Sparsity: 14.58\%]{
    \includegraphics[scale = 1]{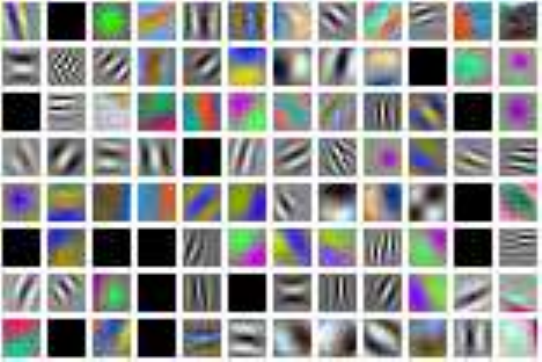}
    \label{fig:6_c}
  }
  \vspace{-0.5em}
 \caption{Visualizations of the first convolutional layer for pruning the whole AlexNet using SSR with different regularizers. (a) SSR with $\ell_1$-regularization results in element-wise sparse filters. (b) SSR with $\ell_{2,1}$-regularization results in filter-wise removal (filters with dark color). (c) SSR with $\ell_{2,0}$-regularization obtains more structured sparse filters, and achieves lower Top-5 error than SSR-L2,1.} 
 \label{fig6}
\end{figure*}

\textbf{Visualization.}
To verify the effectiveness of structured filter sparsity, we visualize the filters in the first convolutional layer of AlexNet using SSR with three different regularizers (\emph{i.e.}, $\ell_1$-norm, $\ell_{2,1}$-norm and $\ell_{2,0}$-norm), as shown in Fig.\,\ref{fig6}. Although SSR with $\ell_1$-regularization obtains a large amount of sparse filter elements, it is unable to remove the whole filter, which leads to a very limited speedup without the specialized software. In contrast, SSR with $\ell_{2,1}$-regularization or $\ell_{2,0}$-regularization results in complete filter removal, which directly accelerates the network inference. Compared to $\ell_{2,1}$-regularization, SSR with $\ell_{2,0}$-regularization achieves a lower Top-5 error with more structured sparse filters. This is due to that the $\ell_{2,0}$-regularization explicitly provides the natural constraint for filter selection.

%

\subsection{Generalization Ability for Transfer Learning.}
SSR has demonstrated its effectiveness to simutaneously accelerate and compress CNNs on MNIST and ImageNet 2012 classification tasks. We further investigate the generalization ability of the compressed model in transfer learning, including domain adaptation and object detection. For better discussion, we take VGG-16 as our baseline model. 

\subsubsection{Domain Adaptation}
Since SSR does not change the network structure, a model pruned on ImageNet can be easily transferred into other domains. To evaluate the domain adaptation ability of the compressed model, we consider a practical application in which we transfer the pruned model trained on ImageNet  into a smaller one by using a domain-specific dataset. To this end, we select a public domain-specific dataset CUB-200 \cite{WelinderEtal2010} for fine-grained classification to evaluate the ability of domain adaptation. CUB-200 contains 11,788 images of 200 different bird species, which contains 5,994 training images and 5,794 testing images. To make a fair comparison, we fine-tune the compressed models on ImageNet by Random, L1, APoZ and SSR with the same hyper-parameters and epochs\footnote{The related implementation details are described in the following: https://github.com/Roll920/fine-tune-avg-vgg16.}. The results of fine-grained classification are shown in Table\,\ref{tab_domain}.

\begin{table}[t]
\begin{center}
\caption{Comparison of different compressed models for fine-grained classification on CUB-200.}
\vspace{-0.5em}
\label{tab_domain}
\begin{tabular}{|c|c|c|c|}
\hline
Method & \#param. & FLOPs & Top-1 err. \\
\hline\hline
VGG-16 & 135.1M & 15.5B & 27.60\% \\ \hline
Random & 124.6M & 4.5B & 36.64\% \\ \hline
L1 \cite{li2017pruning} & 124.6M & 4.5B & 30.00\% \\ \hline
APoZ \cite{hu2016network} & 124.6M & 4.5B & 29.10\% \\ \hline
SSR-L2,1 & 124.6M & 4.5B & 28.70\% \\ \hline\hline
Random-GAP & 8.8M & 4.4B & 37.86\% \\ \hline
L1-GAP \cite{li2017pruning} & 8.8M & 4.4B & 33.20\% \\ \hline
APoZ-GAP \cite{hu2016network} & 8.8M & 4.4B & 32.14\% \\ \hline
SSR-L2,1-GAP & 8.8M & 4.4B & 29.55\% \\ \hline
\end{tabular}
\end{center}
\end{table}

The pre-trained VGG-16 is first fine-tuned on the CUB-200 dataset, which is an effective method to directly transfer the model from ImageNet domain to CUB-200 domain. As shown in Table\,\ref{tab_domain}, the pre-trained VGG-16 achieves the lowest error (27.60\% Top-1 error) but has a huge memory cost and slow inference speed (\emph{i.e.}, 135.1M parameters and 15.5B FLOPs). 
We then fine-tune the compressed networks, which are previously compressed in the ImageNet domain by Random, L1 \cite{li2017pruning}, APoZ \cite{hu2016network} and SSR, respectively. Compared to Random, L1 and APoZ, the model compressed by SSR-L2,1 achieves the best performance by an increase of only 1.1\% Top-1 error, with 124.6M parameters and 4.5B FLOPs. Furthermore, we also fine-tune the compressed models, in which the traditional fully-connected layers are replaced with GAP. We obtain a more compact model with 8.8M parameters and 4.4B FLOPs, \emph{i.e.}, 15.4$\times$ lower memory cost and theoretical 3.5$\times$ inference speedup than VGG-16. Compared to the three alternative selection criteria, SSR-L2,1-GAP achieves the lowest Top-1 error, \emph{i.e.}, 29.55\% Top-1 error, which is an increase of only 1.95\% Top-1 error.

\begin{table}[!t]
\small
\begin{center}
\caption{The speedup for Faster R-CNN detection.}
\vspace{-0.5em}
\label{tab_object_detection}
\begin{tabular}{|c|c|c|c|c|}
\hline
Device & Method & Speedup & mAP & $\Delta$ mAP \\
\hline \hline
\multirow{5}{*}{Titan X GPU} & VGG-16 & Baseline & 68.7 & - \\ \cline{2-5}
& Random & 2.45 & 67.9 & 0.8 \\ \cline{2-5}
& L1 \cite{li2017pruning} & 2.45 & 68.1 & 0.6 \\ \cline{2-5}
& APoZ \cite{hu2016network} & 2.45 & 67.0 & 1.7 \\ \cline{2-5}
& SSR-L2,1 & 2.45 & 68.4 & 0.3 \\ \hline
\end{tabular}
\end{center}
\end{table}

\subsubsection{Object Detection}
We also evaluate the ability of transfer learning for the compressed VGG-16 by Random, L1 \cite{li2017pruning}, APoZ \cite{hu2016network} and SSR with $\ell_{2,1}$-regularization, which are deployed over Faster R-CNN \cite{ren2015faster} for object detections. 
The PASCAL VOC 2007 object detection benchmark is selected to evaluate the performance of our models by mean Average Precision (mAP), which contains about 5K training/validation images and 5K testing images. 
In our experiments, we first compress VGG-16 by Random, L1, APoZ and SSR-L2,1 on ImageNet, and then use the compressed models as the pre-trained models for Faster-RCNN with the default training settings.

The actual running time of Faster R-CNN is 189ms/image on Titan X GPU. 
Compared to VGG-16, we get an actual detection time of 77ms with $2.45\times$ acceleration on Titan X. 
As shown in Table\,\ref{tab_object_detection}, interestingly, filter pruning by random criterion works interestingly well for object detection, which is due to the self-recovery ability in training the specific PASCAL VOC 2007 dataset. In contrary, APoZ may be unsuitable for object detection, which achieves the lowest mAP, \emph{i.e.}, 1.7\% mAP drops.
Compared to three alternative pruning criteria, SSR-L2,1 achieves the best performance by a factor of $2.45\times$ speedup on Titan X with only 0.3\% mAP drops, which is still very practical for real-world application.

\section{Conclusion}
In this paper, we propose a unified filter pruning scheme, termed structured sparsity regularization (SSR), for CNN acceleration and compression. SSR captures the relationship between global output and local filter pruning, forms a novel optimization problem based on two structured sparsity with $\ell_{2,1}$-norm and $\ell_{2,0}$-norm, both of which can be efficiently solved by a novel Alternative Updating with Lagrange Multipliers (AULM). The proposed AULM is fast to generate structured filters and is adaptive to prune redundant filters. We have demonstrated that the proposed SSR scheme achieves superior performance over the state-of-the-art filter pruning methods \cite{li2017pruning,hu2016network,wen2016learning,luo2017ThiNet,molchanov2017pruning}. We further evaluate the effectiveness of the compressed model by SSR when being applied to domain adaptation and objection detection.

In the future, we would like to investigate the specific design of filter pruning for ResNet and DenseNet, including (1) how to effectively prune the shortcut connection in the residual block, (2) design a more effective strategy for accelerating batch normalization and pooling layers, which are left unexploited in the existing works, and (3) design a novel filter selection layer to prevent the dimension mismatch of different dense blocks, which is due to the dense connectivity in the natural DenseNet.
\section*{Acknowledgment}

This work is supported by the National Key R\&D Program of China (No. 2017YFC0113000, No. 2016YFB1001503 and No. 2018YFB1107400), the Nature Science Foundation of China (No.U1705262, No. 61772443, No. 61402388, No. 61572410 and No. 61871470) 
, the Post Doctoral Innovative Talent Support Program under Grant BX201600094, the China Post-Doctoral Science Foundation under Grant 2017M612134 and the Nature Science Foundation of Fujian Province, China (No. 2017J01125).

\ifCLASSOPTIONcaptionsoff
  \newpage
\fi



%

\bibliographystyle{IEEEtran}
\bibliography{mybib}

\end{document}